\documentclass{article}
\usepackage[margin=1in]{geometry}
\usepackage[hidelinks]{hyperref}
\hypersetup{
    colorlinks=true,
    linkcolor=blue,
    citecolor=blue,
    urlcolor=blue,
}

\makeatletter
\patchcmd\@footnotemark{{link}}{{footnote}}{}{\fail}
\makeatother
\ActivateGenericHook{hyp/link/footnote}
\AddToHook{hyp/link/footnote}{\hypersetup{linkcolor=black}}

\usepackage{graphicx} 
\usepackage{subcaption}
\usepackage{caption}
\usepackage{amsmath}
\usepackage{stmaryrd}
\usepackage{amssymb}
\usepackage{amsfonts}
\usepackage{float}
\usepackage{tikz}
\usetikzlibrary{positioning}
\usepackage{algorithm}
\usepackage{algpseudocode}
\usepackage{diagbox}
\usepackage{amsthm}
\usepackage{mathrsfs} 
\usepackage{mathtools}
\usepackage{setspace}
\usepackage{xcolor}

\newtheorem{theorem}{Theorem}
\newtheorem*{theorem*}{Theorem}
\newtheorem{proposition}{Proposition}
\newtheorem*{proposition*}{Proposition}
\newtheorem{corollary}{Corollary}
\newtheorem*{corollary*}{Corollary}

\theoremstyle{definition}
	
\newtheorem{remark}{Remark}

\newcommand{\cC}{\mathcal{C}}

\newcommand{\cI}{\mathcal{I}}

\newcommand{\cS}{\mathcal{S}}

\newcommand{\cX}{\mathcal{X}}
\newcommand{\cY}{\mathcal{Y}}

\newcommand{\N}{\mathbb{N}}
\newcommand{\R}{\mathbb{R}}

\DeclareMathOperator*{\argmin}{arg\,min}

\renewcommand{\epsilon}{\varepsilon}

\algnewcommand\Input{\item[\textbf{Input:}]}
\algnewcommand\Output{\item[\textbf{Output:}]}

\usepackage[symbol*]{footmisc}

\begin{document}

\begin{center}

    \rule{\linewidth}{2pt}
    
    \vspace{0.5cm}
    
    {\Large \bfseries Cluster-Aware Matching via Laplacian Optimal Transport}
    
    \vspace{0.3cm}
    
    \rule{\linewidth}{0.7pt}
    
    \vspace{1cm}
    
    \textbf{Gabriel Samberg}, \textit{Department of Applied Mathematics, Tel Aviv University} \\
    \textbf{YoonHaeng Hur}\footnote{Corresponding author: yoonhaeng.hur@columbia.edu}, \textit{Department of Statistics, Columbia University} \\
    \textbf{Yuehaw Khoo}, \textit{Department of Statistics, University of Chicago} \\
    \textbf{Nir Sharon}, \textit{Department of Applied Mathematics, Tel Aviv University}

\end{center}

\vspace{0.5cm}

\begin{abstract}
    In many applications of matching, the point clouds to be matched are not merely unstructured sets of points but rather samples from distributions with an intrinsic cluster structure. In such cases, as individual points are often interchangeable within a coherent region, finding a robust region-to-region alignment is more desirable than establishing a precise point-to-point correspondence. To this end, we propose a novel approach for cluster-aware matching based on Laplacian Optimal Transport (LapOT). The key idea is to regularize the optimal transport problem with quadratic Laplacian terms constructed from similarity graphs of the point clouds, which encourages the optimal coupling to respect the cluster structure of both point sets. We also introduce Refined Simultaneous Clustering (RSC), a method that leverages the cluster-aware coupling obtained from LapOT to produce consistent partitions across the point sets, which can overcome the limitations of independent clustering and yield more stable and interpretable results. We demonstrate the effectiveness of our approach through theoretical analysis and empirical experiments, showing that LapOT indeed produces cluster-aware matching that leads to more consistent and meaningful alignments between point clouds.
\end{abstract}

\section{Introduction} 
Matching is a fundamental problem of establishing meaningful correspondences or alignment \cite{van2011survey,sahilliouglu2020recent} between two or more sets of points drawn from potentially distinct distributions or geometric spaces. Traditionally rooted in computer vision and pattern recognition for rigid and non-rigid shape alignment \cite{besl1992method,myronenko2010point}, the scope of point matching has expanded significantly with the advent of high-dimensional statistics and machine learning. Importantly, along with the rise of closely related optimal transport theory \cite{villani_2003} and computation \cite{peyre2019computational}, establishing accurate point-to-point correspondences is increasingly critical in many applications, such as computational biology \cite{schiebinger2019optimal,blumberg2020mrec}, biological imaging \cite{riahi2023alignot,singer2024alignment,xiao2025optimal}, word translation \cite{alvarez2018gromov,grave2019unsupervised}, to name a few.

In many applications, the point clouds to be matched are not merely unstructured sets of points but rather samples from distributions with an intrinsic cluster structure. These clusters often represent meaningful regions or components of the underlying objects, such as the limbs of a human figure or functional groups of proteins. In such cases, as individual points are often interchangeable within a coherent region, finding a robust region-to-region alignment is more desirable than establishing a precise point-to-point correspondence. A natural yet naive approach to achieve such a region-level alignment would be to first cluster each point cloud independently and then match the resulting clusters. However, this two-stage approach can be extremely problematic due to the instability of clustering algorithms \cite{ben2006sober,rakhlin2006stability,von2010clustering}, leading to inconsistent partitions across the point sets, which in turn severely degrades the quality of the subsequent matching. This instability is illustrated in Figure \ref{fig:RSC_motivating}(a), where independent clustering fails to produce consistent partitions for two similar shapes. This motivates the need for a cluster-aware matching framework that integrates information about the cluster structure directly into the matching process. By treating matching and clustering as coupled problems, we can leverage the shared structure of the point clouds to obtain more robust and meaningful correspondences.

To this end, we propose a novel approach for cluster-aware matching based on Laplacian Optimal Transport (LapOT). The key idea is to regularize the optimal transport problem with quadratic Laplacian terms that encourage the optimal coupling to respect the cluster structure of both point sets. This is achieved by constructing similarity graphs for each point cloud, where the edges encode the likelihood of points belonging to the same cluster. The resulting Laplacian regularization terms promote similar rows and columns in the optimal coupling for points that are close in their respective similarity graphs, effectively encouraging a cluster-aware matching. We demonstrate the effectiveness of our approach through theoretical analysis and empirical experiments, showing that LapOT indeed produces cluster-aware couplings that lead to more consistent and meaningful alignments between point clouds.

We also introduce a Refined Simultaneous Clustering (RSC) method that leverages the cluster-aware coupling obtained from LapOT to produce consistent partitions across the point sets. By conditioning the clustering process on the information provided by the matching, RSC can overcome the limitations of independent clustering and yield more stable and interpretable results. Figure \ref{fig:RSC_motivating}(b) illustrates the improved clustering obtained by RSC, which produces consistent and meaningful clusters across both shapes.

The rest of the paper is organized as follows. In Section \ref{sec:lapot}, after laying out the relevant background, we introduce the LapOT framework. Section \ref{sec:theory} establishes theoretical guarantees for the cluster-aware properties of the optimal coupling obtained from LapOT. In Section \ref{sec:rsc}, we present the RSC method and demonstrate its effectiveness through experiments on 3D shapes, followed by further applications in Section \ref{sec:applications}.

\begin{figure}[!htbp]
  \centering
  \subfloat[Independent Clustering]{
    \includegraphics[width=0.48\textwidth, trim=0.6in 0.6in 0.6in 0.6in, clip]{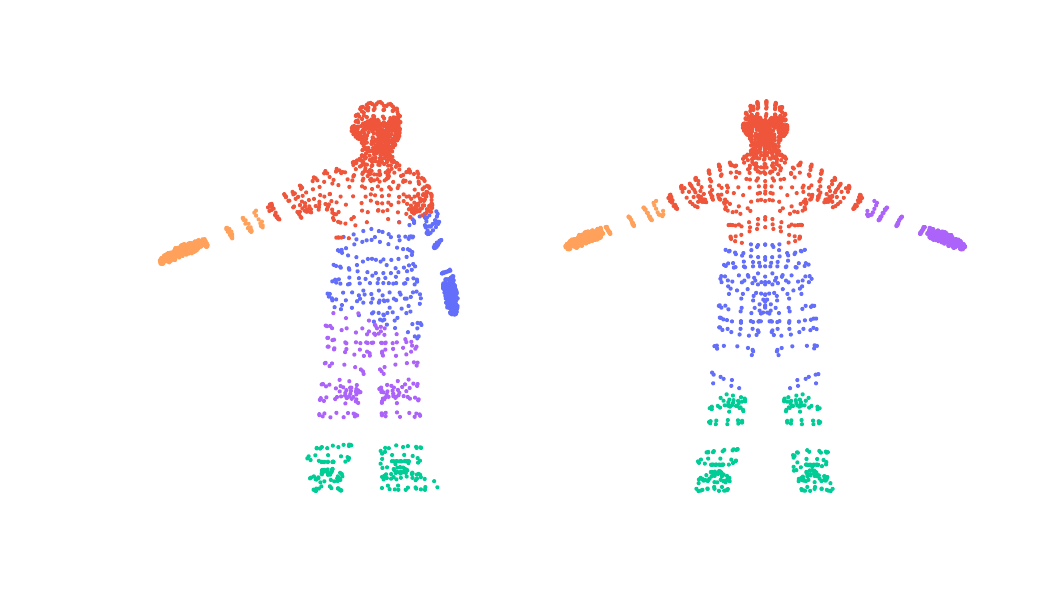}
  }
  \subfloat[Our Method (RSC)]{
    \includegraphics[width=0.48\textwidth, trim=0.6in 0.6in 0.6in 0.6in, clip]{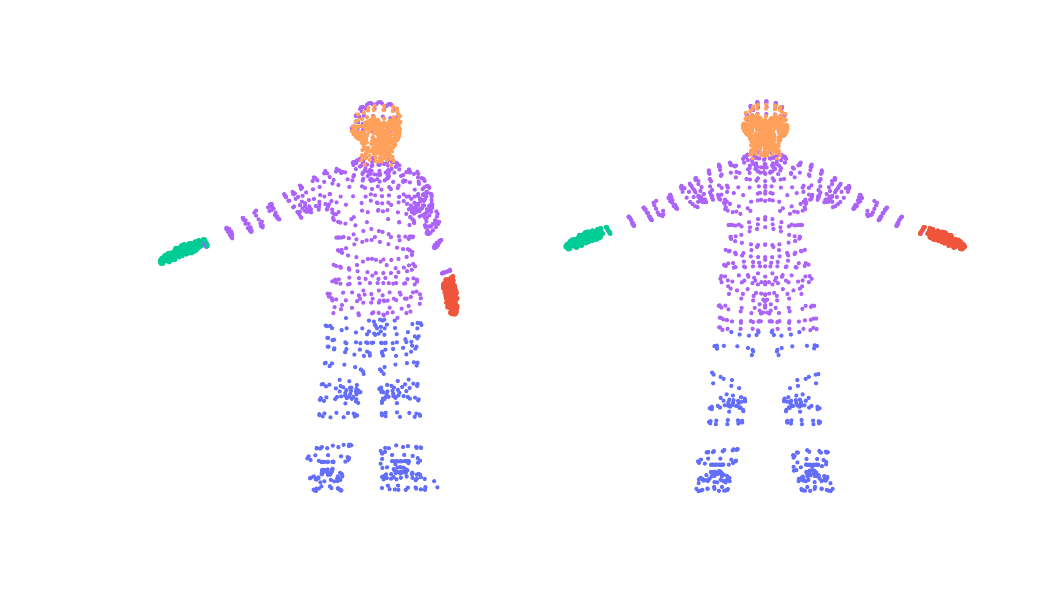}
  }
  \caption{Clustering of two 3D human shapes from the CAPOD dataset \cite{papadakis2014canonically}. (a) is obtained by applying spectral clustering independently to the two shapes with $k = 5$ for each shape, which yields inconsistent partitions. (b) is the result of our proposed Refined Simultaneous Clustering (RSC) method, which produces consistent and meaningful clusters across both shapes learned from the cluster-aware matching obtained via Laplacian optimal transport. See Section \ref{sec:rsc} for details.}
  \label{fig:RSC_motivating}
\end{figure}

\paragraph{Notation.}
Let $\mathrm{tr}(A)$ denote the trace of a square matrix $A$. For matrices $A, B$ of the same dimension, let $\langle A, B \rangle \triangleq \mathrm{tr}(A^\top B)$ denote their Frobenius inner product, with the associated Frobenius norm $\|A\|_{\mathrm{F}} \triangleq \sqrt{\langle A, A \rangle}$. For a matrix $A \in \R^{n \times m}$, let $\|A\|_1 \triangleq \sum_{i, j} |A_{i j}|$ and $\|A\|_\infty = \max_{i,j} |A_{i j}|$. For a vector $v \in \R^n$ and a symmetric positive semi-definite matrix $L \in \R^{n \times n}$, let $\|v\|_{L} \triangleq \sqrt{v^\top L v}$. Let $\R_+ = [0,\infty)$. For any $n \in \N$, let $\cS_n$ be the set of all permutations of $[n] = \{1, \ldots, n\}$, let $1_n = (1, \ldots, 1) \in \R^n$ denote the all-ones vector, and let $\Delta_n \triangleq \{a \in \R_+^n : \sum_{i = 1}^n a_i = 1\}$ be the probability simplex. For $a \in \Delta_n$ and $b \in \Delta_m$, we denote by $\Pi_{a, b} \triangleq \{P \in \R_+^{n \times m} : P 1_m = a ~ \text{and} ~ P^\top 1_n = b\}$ the set of couplings between $a$ and $b$. For any $P \in \R_+^{n \times m}$, let $H(P) \triangleq - \sum_{i, j} P_{i j} (\log(P_{i j}) - 1)$ denote the Shannon entropy; we also use $H(v) \triangleq - \sum_{i} v_i (\log(v_i) - 1)$ to denote the Shannon entropy of a vector $v \in \R_+^n$. For two vectors $u, v \in \R^n$, let $u / v$ denote the element-wise division of $u$ by $v$. For a vector $v \in \R^n$, let $\mathrm{diag}(v)$ denote the diagonal matrix with $v$ on the diagonal. For any $z \in \R^d$, let $\delta_z$ denote the Dirac measure at $z$.

\section{Laplacian Optimal Transport}
\label{sec:lapot}
Point matching concerns finding a correspondence between two sets of points $\{X_1, \ldots, X_n\}$ and $\{Y_1, \ldots, Y_m\}$ defined on some sets $\cX$ and $\cY$, respectively. This section introduces Laplacian Optimal Transport (LapOT), a novel approach for point matching that incorporates the cluster structure of the point sets. We first review relevant background on point matching and optimal transport, followed by a detailed description of LapOT and its low-rank extension. 

\subsection{Preliminaries: Matching, Optimal Transport, and Clustering}
\label{sec:preliminaries}

\paragraph{Quadratic Matching.}
Quadratic matching is one standard approach for point matching, which seeks a suitable alignment between two sets of points based on pairwise relationships. To this end, we first define two pairwise relationship matrices $A \in \R^{n \times n}$ and $B \in \R^{m \times m}$, which capture the relationships between the points in $\{X_1, \ldots, X_n\}$ and $\{Y_1, \ldots, Y_m\}$, respectively. The choice of $A$ and $B$ depends on the specific application and the nature of the data. For instance, when $\cX$ is a metric space with a metric $d_\cX$, we may set $A_{i i'} = d_\cX(X_i, X_{i'})$; when the points are from a graph, we may use the adjacency matrix as $A$. Once suitable $A, B$ are chosen, quadratic matching seeks the best alignment between them. Particularly, when $n = m$, this can be formulated as follows:
\begin{equation}
    \label{eq:quadratic_matching}
    \min_{\sigma \in \cS_n} \sum_{i, i' = 1}^{n} |A_{i i'} - B_{\sigma(i) \sigma(i')}|^p,
\end{equation}
where $p > 0$ is a suitable exponent. In words, \eqref{eq:quadratic_matching} finds the permutation that minimizes the total discrepancy between $A$ and $B$ after alignment. When $n \neq m$, we may seek the best alignment in terms of couplings, leading to the following formulation:
\begin{equation}
    \label{eq:gw}
    \min_{\pi \in \Pi_{a, b}} \sum_{i, i' = 1}^{n} \sum_{j, j' = 1}^{m} |A_{i i'} - B_{j j'}|^p \pi_{i j} \pi_{i' j'},
\end{equation}
where $a \in \Delta_n$ and $b \in \Delta_m$ are user-specified weights to emphasize the importance of the points. In plain language, this formulation seeks a coupling that aligns $A, B$ with the smallest total discrepancy. When $\cX, \cY$ are metric spaces with $A, B$ being the pairwise distances of the points, \eqref{eq:gw} is the Gromov-Wasserstein distance \cite{memoli2011gromov} between two empirical measures $\sum_{i = 1}^{n} a_i \delta_{X_i}$ and $\sum_{j = 1}^{m} b_j \delta_{Y_j}$. Despite its rich theoretical properties \cite{memoli2024comparison,dumont2025existence} and practical success across various domains \cite{peyre2016gromov,solomon2016entropic,alvarez2018gromov,xu2019gromov,bunne2019learning,hur2024reversible}, quadratic matching is NP-hard: notice that \eqref{eq:quadratic_matching} is a quadratic assignment problem \cite{koopmans1957assignment,loiola2007survey}, and \eqref{eq:gw} is a non-convex quadratic program.

\paragraph{Profile-Based Matching and Optimal Transport.}
Another approach to point matching is to assign suitable profiles to the points and match them if the profiles are similar, which has been considered by \cite{ding2021efficient} and \cite{hur2025robust} for graph and object matching, respectively. Concretely, we assign to $X_i$ the weighted distribution $\mu_i \triangleq \sum_{i' = 1}^{n} a_{i'} \delta_{A_{i i'}}$ as its profile, called a degree profile in \cite{ding2021efficient} when $A$ is the adjacency matrix and a distance profile in \cite{hur2025robust} when $A$ consists of the pairwise distances. Similarly, $Y_j$'s profile is defined as $\nu_j \triangleq \sum_{j' = 1}^{m} b_{j'} \delta_{B_{j j'}}$. Here, $a_i$'s and $b_j$'s are user-specified weights to emphasize the importance of the points. Now that the profiles are one-dimensional measures, we construct a matrix $C \in \R^{n \times m}$ whose entries measure the similarity between the profiles based on a suitable discrepancy, for instance, $C_{i j} = W_1(\mu_i, \nu_j)$ when using the Wasserstein distance. Then, the point matching problem can be reduced to the following optimization problem:
\begin{equation}
    \label{eq:discrete_ot}
    \min_{\pi \in \Pi_{a, b}} \langle \pi, C \rangle,
\end{equation}
which is a discrete optimal transport (OT) problem between $\sum_{i = 1}^{n} a_i \delta_{X_i}$ and $\sum_{j = 1}^{m} b_j \delta_{Y_j}$, with $C$ serving as a transport cost matrix. Unlike quadratic matching, \eqref{eq:discrete_ot} is a linear program that can be solved by polynomial-time algorithms. Of course, solving \eqref{eq:discrete_ot} can still be computationally expensive for large-scale problems. To tackle this, the following entropic regularized optimal transport problem \cite{wilson1969use,cuturi2013sinkhorn,peyre2019computational} has been proposed: given any $a \in \Delta_n$, $b \in \Delta_m$, and $C \in \R^{n \times m}$, 
\begin{equation}
    \label{eq:entropic_ot}
    \mathsf{OT}_{\epsilon}(a, b, C) \triangleq \min_{\pi \in \Pi_{a, b}} \langle \pi, C \rangle -\epsilon H(\pi).
\end{equation}
Adding the entropic regularization term to \eqref{eq:discrete_ot} leads to a strongly convex problem. Crucially, \eqref{eq:entropic_ot} can be solved efficiently via the Sinkhorn algorithm that iteratively scales the kernel matrix $e^{-C / \epsilon}$ so that its row and column sums become $a$ and $b$, respectively.

\paragraph{Clustering via Graph Cuts.} 
One common approach to clustering is to view the data as a graph and partition the graph into clusters by minimizing the graph cuts. To be specific, consider an undirected graph with $n$ vertices with an adjacency matrix $K \in \R^{n \times n}$. Here, the idea is to cluster the vertices of the graph by minimizing the total weight of edges that are cut by the partition. Given a partition $\cI_1 \cup \cdots \cup \cI_k = [n]$ of the vertices, each partition $\cI_j$ represents a cluster, and the total weight of edges that are cut by $\cI_j$ is given by $\sum_{i \in \cI_j, i' \notin \cI_j} K_{i i'}$. Hence, the sum of these weights for all partitions, weighted by the size of the partitions, can be written as
\begin{equation*}
    \sum_{j = 1}^{k} \frac{\sum_{i \in \cI_j, i' \notin \cI_j} K_{i i'}}{|\cI_j|},
\end{equation*}
which is called the ratio cut of the partition. This can be rewritten as $\langle P, L P \rangle$, where $L = \mathrm{diag}(K 1_n) - K$ is the unnormalized graph Laplacian and $P \in \R^{n \times k}$ is the partition matrix defined by $P_{i j} = \frac{1}{\sqrt{|\cI_j|}}$ if vertex $i$ belongs to partition $\cI_j$ and $P_{i j} = 0$ otherwise. Accordingly, minimizing the ratio cut can be formulated as the following optimization problem:
\begin{equation*}
    \min_{P \in \R^{n \times k}} \langle P, L P  \rangle, \quad \text{subject to} ~ P_{i j} 
    = 
    \begin{cases}
        \frac{1}{\sqrt{|\cI_j|}} & \text{if vertex} ~ i ~ \text{belongs to partition} ~ \cI_j, \\
        0 & \text{otherwise}.
    \end{cases}
\end{equation*}
This, however, is a combinatorial optimization problem that is NP-hard. A common relaxation is to relax the constraint on $P$ to be $P^\top P = I_k$, which leads to the following optimization problem:
\begin{equation*}
    \min_{P \in \R^{n \times k}} \langle P, L P \rangle, \quad \text{subject to} ~ P^\top P = I_k,
\end{equation*}
which boils down to finding the $k$ eigenvectors of $L$ corresponding to the $k$ smallest eigenvalues. This is the backbone of spectral clustering \cite{belkin2001laplacian,coifman2006diffusion,von2007tutorial}, which is a widely used clustering algorithm in practice. Instability of clustering \cite{ben2006sober,rakhlin2006stability,von2010clustering} can lead to unreliable scientific findings and downstream decisions filled with uncertainty \cite{senbabaouglu2014critical, smith2025lack}, which has motivated a huge body of work on uncertainty quantification for clustering \cite{kerr2001bootcluster, hofmans2015bootstrapkmeans,liu2018ukmeans,huang2015statistical,wade2018bayesian,hur2026inference,nath2026weighted}. However, these methods are not designed to produce consistent partitions across multiple point sets, which is crucial for cluster-aware matching. This motivates the need for a new approach that can leverage the shared structure of the point clouds to obtain more robust and meaningful correspondences.

\subsection{Laplacian Optimal Transport (LapOT)}
We introduce Laplacian Optimal Transport (LapOT), a novel approach for point matching that incorporates the cluster structure of the point sets. The key idea is to regularize the optimal transport problem \eqref{eq:entropic_ot} with quadratic Laplacian regularization terms that encourage the optimal coupling to carry the cluster structure of both point sets. 

To this end, we require the user to specify the cluster structure of the point sets in the form of similarity graphs, represented by the symmetric matrices $K_X \in \R^{n \times n}$ and $K_Y \in \R^{m \times m}$, for the point sets $\{X_1, \ldots, X_n\}$ and $\{Y_1, \ldots, Y_m\}$, respectively. The entry $(K_X)_{i i'}$ (resp.\ $(K_Y)_{j j'}$) represents the similarity between $X_i$ and $X_{i'}$ (resp.\ $Y_j$ and $Y_{j'}$), where the larger value indicates higher similarity and thus being more likely to belong to the same cluster. The choice of $K_X$ and $K_Y$ depends on the specific application and the nature of the data. For instance, when $\cX$ is a metric space with a metric $d_\cX$, we may set $(K_X)_{i i'} = e^{-d_\cX(X_i, X_{i'}) / \sigma}$ for some $\sigma > 0$; when the points are from a graph, we may use the usual binary adjacency matrix as $K_X$. The same applies to $K_Y$ for the point set $\{Y_1, \ldots, Y_m\}$.

Now, we want the optimal coupling to reflect the cluster structure encoded by $K_X$ and $K_Y$. Concretely, when $(K_X)_{i i'}$ (resp.\ $(K_Y)_{j j'}$) is large, we want to encourage the $i$-th and $i'$-th rows (resp.\ $j$-th and $j'$-th columns) of the optimal coupling to be similar to each other. In other words, if $X_i$ and $X_{i'}$ are similar to each other, we want them to be matched to similar distributions over $\{Y_1, \ldots, Y_m\}$; the same applies to $Y_j$'s. To do so, we define $L_X = \mathrm{diag}(K_X 1_n) - K_X$ and $L_Y = \mathrm{diag}(K_Y 1_m) - K_Y$, which are the unnormalized graph Laplacians of the similarity graphs defined by $K_X$ and $K_Y$, respectively. Then, for a coupling $\pi \in \Pi_{a, b}$, consider the following terms:
\begin{equation}
    \label{eq:laplacian_terms}
    \begin{aligned}
        \langle \pi, L_X \pi \rangle & = \mathrm{tr}(\pi^\top L_X\pi) = \frac{1}{2} \sum_{i, i' = 1}^{n} (K_{X})_{i i'} \|\pi_i - \pi_{i'}\|_2^2, \\
        \langle \pi, \pi L_Y \rangle & = \mathrm{tr}(\pi L_Y \pi^\top) = \frac{1}{2}\sum_{j, j' = 1}^{m}(K_{Y})_{j j'}\|\pi^\top_j - \pi^\top_{j'}\|_2^2.
    \end{aligned}
\end{equation}
Here, $\pi_i$ and $\pi^\top_j$ are the $i$-th row and $j$-th column of the coupling $\pi$, respectively. By adding these terms to \eqref{eq:entropic_ot}, we encourage the optimal coupling to have similar rows for similar $X_i$'s and similar columns for similar $Y_j$'s. Accordingly, the optimal coupling is incentivized to carry the cluster structure of both point sets defined by $K_X$ and $K_Y$.

In summary, given any $a \in \Delta_n$, $b \in \Delta_m$, $C \in \R^{n \times m}$, and $L_X, L_Y$ defined as above, we define LapOT as follows:
\begin{equation}
    \label{eq:LapOT}
    \mathsf{LapOT}_{\lambda_x, \lambda_y, \lambda}(a, b, C, L_X, L_Y) \triangleq \min_{\pi \in \Pi_{a, b}} \langle \pi, C \rangle + \lambda_x \langle \pi, L_X \pi \rangle + \lambda_y \langle \pi, \pi L_Y \rangle - \lambda H(\pi),
\end{equation}
where $\lambda_x, \lambda_y, \lambda > 0$ are hyperparameters that control the strength of the regularization. We can view LapOT as a generalization of entropic OT: when $\lambda_x = \lambda_y = 0$, LapOT reduces to entropic OT. By tuning $\lambda_x, \lambda_y$, we can control the extent to which the optimal coupling is encouraged to carry the cluster structure of the point sets, which can be beneficial for downstream tasks such as clustering and alignment.

\paragraph{Optimization of LapOT.}
Note that the objective function of \eqref{eq:LapOT} consists of the convex quadratic terms---as the graph Laplacian is positive semidefinite---and the strongly convex entropic regularization term. Hence, \eqref{eq:LapOT} is a strongly convex optimization problem that can be solved by general-purpose convex optimization solvers. However, such solvers, which usually rely on interior point methods, can be inefficient for large-scale problems. \cite{hur2024convexified} proposed specialized algorithms for smooth objective functions with entropic regularization over couplings, which encompasses \eqref{eq:LapOT} as a special case. Their algorithms call the Sinkhorn algorithm as a subroutine, and the number of Sinkhorn calls is independent of the coupling size $n m$ if $\lambda$ is sufficiently large compared to the norm of $\lambda_x L_X + \lambda_y L_Y$ or depends logarithmically on $n m$ if $\lambda$ is sufficiently small (including the case $\lambda = 0$). Therefore, the algorithms are far more efficient than general-purpose convex optimization solvers, especially for large-scale problems, and we use them to solve \eqref{eq:LapOT} in our experiments. We refer the readers to \cite{hur2024convexified} for more details on the algorithms and their theoretical guarantees.

\begin{remark}
    The Laplacian regularization terms in \eqref{eq:laplacian_terms} were also considered in \cite{ferradans2014regularized} and \cite{courty2016optimal} in the context of domain adaptation. However, they do not study the cluster-awareness property of this regularization, which is the main focus of this paper. 
\end{remark}

\subsection{Low-Rank Extension of Laplacian Optimal Transport}
\label{sec:low_rank_LapOT}
As LapOT encourages the optimal coupling to carry the cluster structure of the point sets, it is natural to expect that the optimal coupling of LapOT enjoys a low-rank structure. While we will formally justify this in Section \ref{sec:theory}, we propose a method to directly impose a low-rank constraint on the optimal coupling of LapOT, which can be beneficial for large-scale problems. 

In the optimal transport literature, low-rank structure of couplings has been proposed and studied by \cite{forrow2019statistical,scetbon2021low,scetbon2022low} based on the following notion of non-negative rank:
\begin{equation}
    \label{eq:non_negative_rank}
    \mathrm{rk}_+(M) := \min\left\{r \in \N : \exists R_1, \ldots, R_r \in \R_+^{n \times m} ~ \text{s.t.} ~ \mathrm{rk}(R_1) = \cdots = \mathrm{rk}(R_r) = 1 ~ \text{and} ~ M = \sum_{i = 1}^{r} R_i\right\},
\end{equation}
where $\mathrm{rk}$ is the usual rank of a matrix. The idea of low-rank optimal transport is to solve the optimal transport problem by imposing a non-negative rank constraint on the coupling, namely, the set $\Pi_{a, b}$ of all couplings is replaced by the set of couplings with a non-negative rank of at most $r$ defined as 
\begin{equation*}    
    \Pi_{a, b}(r) \triangleq \{P \in \Pi_{a, b} : \mathrm{rk}_+(P) \le r\}.
\end{equation*}
The feasible set becomes non-convex, and the resulting optimization problem is no longer convex. In \cite{scetbon2021low}, instead of directly optimizing over $\Pi_{a, b}(r)$, the following observation is made:
\begin{equation}
    \label{eq:lr_ot}
    \Pi_{a, b}(r) = \bigcup_{g \in \Delta_r^+} \Pi_{a, g, b}
    \quad \text{where} \quad
    \Pi_{a, g, b} = \left\{U \mathrm{diag}(1_r / g) V^\top : U \in \Pi_{a, g} ~ \text{and} ~ V \in \Pi_{b, g}\right\},
\end{equation}
which leads to the following reformulation of the low-rank optimal transport problem:
\begin{equation*}
    \min_{\pi \in \Pi_{a, b}(r)} \langle \pi, C \rangle = \min_{(U, V, g) \in \cC(a, b, r)} \langle U \mathrm{diag}(1_r / g) V^\top, C \rangle,
\end{equation*}
where
\begin{equation}
    \label{eq:lr_feasible_set}
    \cC(a, b, r) = \{(U, V, g) \in \R_+^{n \times r} \times \R_+^{m \times r} \times \R_+^r : U 1_r = a, V 1_r = b, U^\top 1_n = V^\top 1_m = g\}.
\end{equation}
While the right-hand side of \eqref{eq:lr_ot} is non-convex as the objective function is not jointly convex in $U, V, g$, one can efficiently utilize Dykstra's algorithm \cite{dykstra1983algorithm} to solve the problem as a subroutine of a mirror descent algorithm with KL divergence. \cite{scetbon2021low} also considers the entropic regularized version of the low-rank optimal transport problem \eqref{eq:lr_ot} by adding individual entropic regularization terms for $U, V, g$ to the objective function, which leads to the following optimization problem:
\begin{equation*}
    \min_{(U, V, g) \in \cC(a, b, r)} \langle U \mathrm{diag}(1_r / g) V^\top, C \rangle - \lambda H(U) - \lambda H(V) - \lambda H(g).
\end{equation*}

Accordingly, a natural extension of LapOT to the low-rank setup can be formulated as follows:
\begin{equation}
    \label{eq:lr_LapOT}
    \min_{(U, V, g) \in \cC(a, b, r)} q(U \mathrm{diag}(1_r / g) V^\top) - \lambda H(U) - \lambda H(V) - \lambda H(g),
\end{equation}
where $q \colon \R_+^{n \times m} \to \R$ is the quadratic objective function of LapOT defined as
\begin{equation*}
    q(\pi) = \langle \pi, C \rangle + \lambda_x \langle \pi, L_X \pi \rangle + \lambda_y \langle \pi, \pi L_Y \rangle.
\end{equation*}
As in \cite{scetbon2021low}, we can efficiently solve \eqref{eq:lr_LapOT} by utilizing Dykstra's algorithm as a subroutine of a mirror descent algorithm with KL divergence. We defer the details of the optimization scheme to Section \ref{sec:lr_LapOT_optimization} in the appendix.

\section{Theory}
\label{sec:theory}
This section studies the theoretical properties of Laplacian optimal transport. As noted earlier, the Laplacian regularization terms in \eqref{eq:laplacian_terms} encourage the optimal coupling to be smooth with respect to the graph structures of $L_X$ and $L_Y$. Therefore, if $L_X$ and $L_Y$ are induced by graphs with multiple connected components, then the optimal coupling is expected to be block-constant with respect to the partitions induced by the connected components, which leads to a low-rank structure of the optimal coupling. 

Our main result, Theorem \ref{thm:optimal_coupling_projection_bound}, provides a non-asymptotic bound on the deviation of the optimal coupling from its block-averaged version with respect to the partitions induced by the connected components of $L_X$ and $L_Y$. This result shows that the optimal coupling is close to a block-constant matrix when (1) the cost matrix is close to a block-constant matrix, (2) the regularization parameters $\lambda_x$ and $\lambda_y$ are large, or (3) the spectral gaps of $L_X$ and $L_Y$ are large.

\begin{theorem}
    \label{thm:optimal_coupling_projection_bound}
    Let $\pi^\star$ be the solution to $\mathsf{LapOT}_{\lambda_x, \lambda_y, \lambda}(a, b, C, L_X, L_Y)$ with $\lambda_x, \lambda_y > 0$. Assume the following.
    \begin{itemize}
        \item[(i)] $L_X$ and $L_Y$ are induced by graphs with $r$ and $s$ connected components, where $\{A_\alpha\}_{\alpha = 1}^{r}$ and $\{B_\beta\}_{\beta = 1}^{s}$ are the corresponding partitions of the index sets $[n]$ and $[m]$, respectively.
        \item[(ii)] $a$ and $b$ are block-constant with respect to the partitions $\{A_\alpha\}_{\alpha = 1}^{r}$ and $\{B_\beta\}_{\beta = 1}^{s}$, respectively.
    \end{itemize}
    Let $L_X = \Phi_X \Lambda_X \Phi_X^\top$ and $L_Y = \Phi_Y \Lambda_Y \Phi_Y^\top$ be their orthogonal decompositions, where $\Lambda_X$ and $\Lambda_Y$ are diagonal matrices consisting of the eigenvalues $\mu_1^X \le \cdots \le \mu_n^X$ and $\mu_1^Y \le \cdots \le \mu_m^Y$ of $L_X$ and $L_Y$, respectively. For any $\ell \in [n]$ and $h \in [m]$, define the projection matrices as $P_{X, \ell} \triangleq \Phi_{X, \ell} \Phi_{X, \ell}^\top$ and $P_{Y, h} \triangleq \Phi_{Y, h} \Phi_{Y, h}^\top$, where $\Phi_{X, \ell}$ and $\Phi_{Y, h}$ are the submatrices consisting of the first $\ell$ and $h$ columns of $\Phi_X$ and $\Phi_Y$, respectively. Then, $P_{X, r} \pi^\star$ and $\pi^\star P_{Y, s}$ are obtained by block-averaging the rows and columns of $\pi^\star$ with respect to the partitions $\{A_\alpha\}_{\alpha = 1}^{r}$ and $\{B_\beta\}_{\beta = 1}^{s}$, respectively. Moreover, we have
    \begin{equation}
        \label{eq:optimal_coupling_projection_bound}
        \|\pi^\star - P_{X, r} \pi^\star\|_{\mathrm{F}}^2 \le \frac{\|P_{X, r} C - C\|_\infty}{\lambda_x \mu_{r + 1}^X} \quad \text{and} \quad \|\pi^\star - \pi^\star P_{Y, s}\|_{\mathrm{F}}^2 \le \frac{\|C P_{Y, s} - C\|_\infty}{\lambda_y \mu_{s + 1}^Y}.
    \end{equation}
\end{theorem}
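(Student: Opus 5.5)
The plan is to compare $\pi^\star$ with its row-block-averaged version $\tilde\pi \triangleq P_{X,r}\pi^\star$ through the optimality of $\pi^\star$ and the strong convexity of the Laplacian term in directions orthogonal to $\ker L_X$. The column statement follows by the symmetric argument with $P_{Y,s}$ acting on the right.

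\textbf{Step 1 (structure).} Since the graph of $L_X$ has connected components $A_1,\dots,A_r$, the kernel of $L_X$ is spanned by the indicators $1_{A_\alpha}$. Hence $\mu_1^X=\dots=\mu_r^X=0<\mu_{r+1}^X$, and $P_{X,r}=\sum_\alpha 1_{A_\alpha}1_{A_\alpha}^\top/|A_\alpha|$, which is the row block-averaging operator. This proves the first claim.

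\textbf{Step 2 (feasibility of $\tilde\pi$).} Entries of $\tilde\pi$ are averages of nonnegative entries, so $\tilde\pi\ge0$. Because $P_{X,r}$ is symmetric and $P_{X,r}1_n=1_n$, we get $\tilde\pi^\top 1_n=\pi^{\star\top}1_n=b$. Also $\tilde\pi 1_m=P_{X,r}a=a$, using that $a$ is block-constant by assumption (ii). So $\tilde\pi\in\Pi_{a,b}$.

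\textbf{Step 3 (comparing objectives).} Let $F$ be the LapOT objective and set $D=\pi^\star-\tilde\pi=(I-P_{X,r})\pi^\star$. Since $L_XP_{X,r}=0$,
\[
\langle\pi^\star,L_X\pi^\star\rangle=\langle D,L_XD\rangle\ge\mu_{r+1}^X\|D\|_{\mathrm F}^2,
\qquad \langle\tilde\pi,L_X\tilde\pi\rangle=0 .
\]
The remaining three terms of the objective are handled as follows.
\begin{itemize}
\item \emph{Cost term.} $\langle\pi^\star-\tilde\pi,C\rangle=\langle\pi^\star,C-P_{X,r}C\rangle\le\|P_{X,r}C-C\|_\infty\|\pi^\star\|_1=\|P_{X,r}C-C\|_\infty$.
\item \emph{$L_Y$ term.} The map $\pi\mapsto\langle\pi,\pi L_Y\rangle=\sum_k \|(\pi^\top)_k\|_{L_Y}^2$ is convex, and each column of $\tilde\pi$ is an average of rows mixed through $P_{X,r}$. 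Writing the term as $\mathrm{tr}(\pi L_Y\pi^\top)$, we have $\mathrm{tr}(P\pi L_Y\pi^\top P)=\mathrm{tr}(P\,\pi L_Y\pi^\top)\le\mathrm{tr}(\pi L_Y\pi^\top)$, since $P$ is an orthogonal projection and $\pi L_Y\pi^\top\succeq0$. Hence this term does not increase when passing to $\tilde\pi$.
\item \emph{Entropy term.} $-H$ is convex and permutation-invariant, and $\tilde\pi$ is obtained by averaging entries within blocks (a doubly stochastic averaging). By Jensen, $-H(\tilde\pi)\le-H(\pi^\star)$.
\end{itemize}

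\textbf{Step 4 (conclusion).} Optimality gives $F(\pi^\star)\le F(\tilde\pi)$. Expanding both sides with Step 3 yields
\[
\lambda_x\mu_{r+1}^X\|D\|_{\mathrm F}^2\le\langle\pi^\star-\tilde\pi,C\rangle+(\text{nonpositive differences})\le\|P_{X,r}C-C\|_\infty,
\]
which is the claimed bound after dividing by $\lambda_x\mu_{r+1}^X$.

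The main obstacle is verifying that the $L_Y$ and entropy terms do not increase under $P_{X,r}$. Both reduce to convexity or Jensen arguments for an averaging operator, and the trace inequality $\mathrm{tr}(PM)\le\mathrm{tr}(M)$ for $M\succeq0$ settles the $L_Y$ part cleanly.
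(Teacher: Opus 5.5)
Your proposal is correct and follows the paper's proof essentially step for step. It compares $\pi^\star$ with $P_{X,r}\pi^\star$ using feasibility from block-constant $a$, $L_XP_{X,r}=0$, the trace inequality $\mathrm{tr}\bigl((I-P_{X,r})\,\pi^\star L_Y\pi^{\star\top}\bigr)\ge 0$, concavity of $H$, H\"older for the cost term, and the spectral lower bound $\langle\pi^\star,L_X\pi^\star\rangle\ge\mu^X_{r+1}\|(I-P_{X,r})\pi^\star\|_{\mathrm F}^2$.

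There are two harmless slips. First, optimality actually yields the cost difference $\langle\tilde\pi-\pi^\star,C\rangle=\langle\pi^\star,P_{X,r}C-C\rangle$, the opposite sign to what you wrote; this does not matter because H\"older bounds the absolute value. Second, $\langle\pi,\pi L_Y\rangle$ is the sum of $\|\cdot\|_{L_Y}^2$ over the rows of $\pi$, not over the columns of $\pi$ as your $(\pi^\top)_k$ notation suggests.
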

\begin{proof}
    From spectral graph theory, e.g., Proposition 2 of \cite{von2007tutorial}, notice that (i) implies that the eigenspaces of $L_X$ and $L_Y$ corresponding to the zero eigenvalues ($\mu_1^X = \cdots = \mu_r^X = 0$ and $\mu_1^Y = \cdots = \mu_s^Y = 0$) are spanned by the indicator vectors of the connected components of the graphs. Accordingly, we have $L_X \Phi_{X, r} = 0$ and $L_Y \Phi_{Y, s} = 0$. Also, we deduce that left multiplication by $P_{X, r}$ and right multiplication by $P_{Y, s}$ are the operators that average matrices with respect to the partitions $\{A_\alpha\}_{\alpha = 1}^{r}$ and $\{B_\beta\}_{\beta = 1}^{s}$, respectively. 

    Now, we compare the objective values of $\pi^\star$ and $P_{X, r} \pi^\star$ in $\mathsf{LapOT}_{\lambda_x, \lambda_y, \lambda}(a, b, C, L_X, L_Y)$. Notice that $P_{X, r} \pi^\star \in \Pi_{a, b}$. To see this, observe that (ii) implies $P_{X, r} a = a$ and $P_{Y, s} b = b$. Hence,
    \begin{equation*}
        P_{X, r} \pi^\star 1_m = P_{X, r} a = a \quad \text{and} \quad (P_{X, r} \pi^\star)^\top 1_n = (\pi^\star)^\top P_{X, r} 1_n = (\pi^\star)^\top 1_n = b.
    \end{equation*}
    As $L_X P_{X, r} = 0$, we have $\langle P_{X, r} \pi^\star, L_X P_{X, r} \pi^\star \rangle = 0$. By the optimality of $\pi^\star$, we have
    \begin{equation*}
        \langle \pi^\star, C \rangle + \lambda_x \langle \pi^\star, L_X \pi^\star \rangle + \lambda_y \langle \pi^\star, \pi^\star L_Y \rangle - \lambda H(\pi^\star) \le \langle P_{X, r} \pi^\star, C \rangle + \lambda_y \langle P_{X, r} \pi^\star, P_{X, r} \pi^\star L_Y \rangle - \lambda H(P_{X, r} \pi^\star).
    \end{equation*}
    As $\langle P_{X, r} \pi^\star, C \rangle = \langle \pi^\star, P_{X, r} C \rangle$, we have
    \begin{equation*}
        \lambda (H(P_{X, r} \pi^\star) - H(\pi^\star)) + \lambda_x \langle \pi^\star, L_X \pi^\star \rangle
        \le
        \langle \pi^\star, P_{X, r} C - C \rangle + \lambda_y \langle P_{X, r} \pi^\star, P_{X, r} \pi^\star L_Y \rangle - \lambda_y \langle \pi^\star, \pi^\star L_Y \rangle.
    \end{equation*}
    Since $P_{X, r} \pi^\star$ is obtained by block-averaging the rows of $\pi^\star$ with respect to the partition $\{A_\alpha\}_{\alpha = 1}^{r}$, we have $H(P_{X, r} \pi^\star) \ge H(\pi^\star)$ due to the concavity of $H$. 
    Also, we observe $\langle P_{X, r} \pi^\star, P_{X, r} \pi^\star L_Y \rangle \le \langle \pi^\star, \pi^\star L_Y \rangle$. To see this, let $\phi_i$ be the $i$-th column of $\Phi_X$. Then,
    \begin{equation*}
        \begin{split}
            \langle \pi^\star, \pi^\star L_Y \rangle - \langle P_{X, r} \pi^\star, P_{X, r} \pi^\star L_Y \rangle 
            & = \mathrm{tr}(\pi^\star L_Y (\pi^\star)^\top) - \mathrm{tr}(\Phi_{X, r}^\top \pi^\star L_Y (\pi^\star)^\top \Phi_{X, r}) \\
            & = \sum_{i = r + 1}^{n} \phi_i^\top \pi^\star L_Y (\pi^\star)^\top \phi_i \\
            & \ge 0,
        \end{split}
    \end{equation*}
    where the inequality holds because $\pi^\star L_Y (\pi^\star)^\top$ is positive semi-definite. Meanwhile, 
    \begin{equation*}
        \langle \pi^\star, P_{X, r} C - C \rangle \le \|\pi^\star\|_1 \|P_{X, r} C - C\|_\infty = \|P_{X, r} C - C\|_\infty.
    \end{equation*}
    Hence, 
    \begin{equation}
        \label{eq:bound_temporary}
        \lambda_x \langle \pi^\star, L_X \pi^\star \rangle
        \le \|P_{X, r} C - C\|_\infty.
    \end{equation}

    Now, we derive a lower bound on $\lambda_x \langle \pi^\star, L_X \pi^\star \rangle$. Let $\tilde{\pi} = \Phi_X^\top \pi^\star$ be the representation of $\pi^\star$ in the eigenbasis of $L_X$. Then, we have
    \begin{equation*}
        \langle \pi^\star, L_X \pi^\star \rangle = \mathrm{tr}((\pi^\star)^\top L_X \pi^\star) = \mathrm{tr}(\tilde{\pi}^\top \Lambda_X \tilde{\pi}) = \sum_{i = 1}^{n} \sum_{j = 1}^{m} \mu_i^X \tilde{\pi}_{i j}^2 \ge \mu_{r + 1}^X \sum_{i = r + 1}^{n} \sum_{j = 1}^{m} \tilde{\pi}_{i j}^2.
    \end{equation*}
    Meanwhile,
    \begin{equation*}
        P_{X, r} \pi^\star
        = \Phi_{X, r} \Phi_{X, r}^\top \pi^\star
        = \Phi_X \begin{bmatrix}
            \Phi_{X, r}^\top \\ 0
        \end{bmatrix} \pi^\star
        = \Phi_X \begin{bmatrix}
            \tilde{\pi}_{1:r} \\
            0
        \end{bmatrix},
    \end{equation*}
    where $\tilde{\pi}_{1:r}$ is the submatrix of $\tilde{\pi}$ containing the first $r$ rows. Hence, from $\pi^\star = \Phi_X \tilde{\pi}$, we have
    \begin{equation*}
        \|\pi^\star - P_{X, r} \pi^\star\|_{\mathrm{F}}^2 = \sum_{i = r + 1}^{n} \sum_{j = 1}^{m} \tilde{\pi}_{i j}^2.
    \end{equation*}
    Hence, $\mu_{r + 1}^X \|\pi^\star - P_{X, r} \pi^\star\|_{\mathrm{F}}^2 \le \langle \pi^\star, L_X \pi^\star \rangle$, which, together with \eqref{eq:bound_temporary}, implies the first inequality in \eqref{eq:optimal_coupling_projection_bound}. The second inequality in \eqref{eq:optimal_coupling_projection_bound} can be proved similarly by comparing the objective values of $\pi^\star$ and $\pi^\star P_{Y, s}$.
\end{proof}

From the bounds in \eqref{eq:optimal_coupling_projection_bound}, we immediately deduce that $\pi^\star = P_{X, r} \pi^\star$ and $\pi^\star = \pi^\star P_{Y, s}$ if $P_{X, r} C = C$ and $C P_{Y, s} = C$, respectively. In other words, if the matching cost is determined at the cluster level, then the optimal coupling indeed captures the cluster-level matching due to the Laplacian regularization terms and thus admits a low-rank structure with non-negative rank, defined in \eqref{eq:non_negative_rank}, bounded by the number of clusters. This is summarized in the following corollary.

\begin{corollary}
    In Theorem \ref{thm:optimal_coupling_projection_bound}, we deduce the following.
    \begin{itemize}
        \item[(i)] If $C$ is row block-constant with respect to the partition $\{A_\alpha\}_{\alpha = 1}^{r}$, namely, $P_{X, r} C = C$, then $\pi^\star$ is row block-constant with respect to the partition $\{A_\alpha\}_{\alpha = 1}^{r}$, namely, $\pi^\star = P_{X, r} \pi^\star$, and $\mathrm{rk}_+(\pi^\star) \le r$.
        \item[(ii)] If $C$ is column block-constant with respect to the partition $\{B_\beta\}_{\beta = 1}^{s}$, namely, $C P_{Y, s} = C$, then $\pi^\star$ is column block-constant with respect to the partition $\{B_\beta\}_{\beta = 1}^{s}$, namely, $\pi^\star = \pi^\star P_{Y, s}$, and $\mathrm{rk}_+(\pi^\star) \le s$.
        \item[(iii)] If both (i) and (ii) hold, or equivalently, $P_{X, r} C P_{Y, s} = C$, then $\pi^\star$ is block-constant with respect to the product partition $\{A_\alpha \times B_\beta\}_{\alpha \in [r], \beta \in [s]}$, namely, $\pi^\star = P_{X, r} \pi^\star P_{Y, s}$, and $\mathrm{rk}_+(\pi^\star) \le \min\{r, s\}$.
    \end{itemize}
\end{corollary}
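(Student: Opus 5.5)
The plan is to read off each part from the bounds in \eqref{eq:optimal_coupling_projection_bound} of Theorem \ref{thm:optimal_coupling_projection_bound}, and then to obtain the non-negative rank bounds by writing down explicit non-negative rank-one decompositions.

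For (i), the hypothesis $P_{X,r}C = C$ makes the right-hand side of the first bound in \eqref{eq:optimal_coupling_projection_bound} zero. Since $\lambda_x > 0$ and $\mu^X_{r+1} > 0$, this gives $\pi^\star = P_{X,r}\pi^\star$. One point needs a check: the graph has exactly $r$ connected components, so the zero eigenvalue has multiplicity exactly $r$ and $\mu^X_{r+1} > 0$. If $r = n$, then $P_{X,r} = I$ and the claim is trivial.

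For the rank bound, $P_{X,r}$ averages rows within blocks, so every row of $\pi^\star$ indexed by $A_\alpha$ equals a common nonnegative row vector $w_\alpha \in \R_+^m$. Hence
\[
\pi^\star = \sum_{\alpha=1}^r 1_{A_\alpha} w_\alpha^\top,
\]
which is a sum of $r$ nonnegative matrices, each of rank at most one. Dropping any zero terms gives $\mathrm{rk}_+(\pi^\star) \le r$. Part (ii) is symmetric: use the second bound of \eqref{eq:optimal_coupling_projection_bound} and decompose $\pi^\star$ along the column blocks $B_\beta$.

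For (iii), first verify the stated equivalence. If both (i) and (ii) hold, then $P_{X,r}CP_{Y,s} = P_{X,r}C = C$. Conversely, if $P_{X,r}CP_{Y,s} = C$, then $P_{X,r}C = P_{X,r}^2 C P_{Y,s} = C$ because $P_{X,r}$ is idempotent; the column condition follows in the same way. Applying (i) and (ii) then gives
\[
\pi^\star = P_{X,r}\pi^\star = P_{X,r}\pi^\star P_{Y,s},
\]
so $\pi^\star$ is constant on each block $A_\alpha \times B_\beta$. Taking the minimum of the two bounds from (i) and (ii) yields $\mathrm{rk}_+(\pi^\star) \le \min\{r,s\}$.

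The only step needing care is the positivity of $\mu^X_{r+1}$ and $\mu^Y_{s+1}$, which is exactly what assumption (i) of the theorem provides via Proposition 2 of \cite{von2007tutorial}. Everything else is direct substitution into the theorem.
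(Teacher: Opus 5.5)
Your proposal is correct and follows the paper's route: the paper likewise reads off $\pi^\star = P_{X,r}\pi^\star$ and $\pi^\star = \pi^\star P_{Y,s}$ from the vanishing right-hand sides of \eqref{eq:optimal_coupling_projection_bound}, and observes that the resulting block structure bounds the non-negative rank by the number of clusters. You also supply, correctly, the details the paper leaves implicit: the positivity of $\mu^X_{r+1}$ and $\mu^Y_{s+1}$ together with the trivial case $r=n$, the explicit decomposition $\pi^\star = \sum_{\alpha} 1_{A_\alpha} w_\alpha^\top$ with zero terms dropped, and the idempotence argument for the equivalence in (iii).
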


Meanwhile, from the bounds in \eqref{eq:optimal_coupling_projection_bound}, we deduce that $\pi^\star$ converges to its block-averaged version as $\lambda_x$ and $\lambda_y$ increase, showing that stronger Laplacian regularization leads to a more pronounced low-rank structure of the optimal coupling. Indeed, at the limit of $\lambda_x \to \infty$ or $\lambda_y \to \infty$, the optimal coupling converges to its block-averaged version. In this case, we can further characterize the limit of the optimal coupling as follows.

\begin{proposition}
    In Theorem \ref{thm:optimal_coupling_projection_bound}, we deduce the following.
    \begin{itemize}
        \item[(i)] If $\lambda_x \to \infty$, then $\pi^\star$ converges as follows:
        \begin{equation*}
            \lim_{\lambda_x \to \infty} \pi^\star = \argmin_{\pi \in \Pi_{a, b}} \langle \pi, P_{X, r} C \rangle + \lambda_y \langle \pi, \pi L_Y \rangle - \lambda H(\pi).
        \end{equation*}
        \item[(ii)] If $\lambda_y \to \infty$, then $\pi^\star$ converges as follows:
        \begin{equation*}
            \lim_{\lambda_y \to \infty} \pi^\star = \argmin_{\pi \in \Pi_{a, b}} \langle \pi, C P_{Y, s} \rangle + \lambda_x \langle \pi, L_X \pi \rangle - \lambda H(\pi).
        \end{equation*}
        \item[(iii)] If $\lambda_x, \lambda_y \to \infty$, then $\pi^\star$ converges as follows:
        \begin{equation*}
            \lim_{\lambda_x, \lambda_y \to \infty} \pi^\star = \argmin_{\pi \in \Pi_{a, b}} \langle \pi, P_{X, r} C P_{Y, s} \rangle - \lambda H(\pi).
        \end{equation*}
    \end{itemize}
\end{proposition}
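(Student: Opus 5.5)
Write $F_{\lambda_x}(\pi) = \langle \pi, C\rangle + \lambda_x \langle \pi, L_X\pi\rangle + \lambda_y\langle \pi, \pi L_Y\rangle - \lambda H(\pi)$ and $\pi^\star_{\lambda_x}$ for its unique minimizer over $\Pi_{a,b}$. The plan for (i) is a compactness argument combined with the bound of Theorem \ref{thm:optimal_coupling_projection_bound} and uniqueness of the limit problem.

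\textbf{Step 1: the limit problem.} Let $\mathcal{R} \triangleq \{\pi \in \Pi_{a,b} : P_{X,r}\pi = \pi\}$, the row block-constant couplings. This set is nonempty: $a b^\top \in \mathcal{R}$ because $P_{X,r} a = a$ by (ii). On $\mathcal{R}$ the $L_X$-term vanishes, and $\langle \pi, C\rangle = \langle \pi, P_{X,r}C\rangle$. Define
\[
G(\pi) = \langle \pi, P_{X,r}C\rangle + \lambda_y\langle\pi,\pi L_Y\rangle - \lambda H(\pi).
\]
Then $F_{\lambda_x} = G$ on $\mathcal{R}$. Let $\bar\pi$ be the unique minimizer of $G$ over $\Pi_{a,b}$, which exists by strong convexity. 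The main point is to show $\bar\pi \in \mathcal{R}$. The averaging argument in the proof of Theorem \ref{thm:optimal_coupling_projection_bound} gives this:
\begin{itemize}
\item $P_{X,r}\bar\pi \in \Pi_{a,b}$;
\item $H(P_{X,r}\bar\pi) \ge H(\bar\pi)$;
\item $\langle P_{X,r}\bar\pi, P_{X,r}\bar\pi L_Y\rangle \le \langle \bar\pi,\bar\pi L_Y\rangle$;
\item $\langle P_{X,r}\bar\pi, P_{X,r}C\rangle = \langle \bar\pi, P_{X,r}C\rangle$, since $P_{X,r}$ is a symmetric idempotent.
\end{itemize}
Hence $G(P_{X,r}\bar\pi) \le G(\bar\pi)$, and uniqueness forces $\bar\pi = P_{X,r}\bar\pi$. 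So $\bar\pi$ is also the minimizer of $G$, equivalently of $F_{\lambda_x}$, over $\mathcal{R}$.

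\textbf{Step 2: convergence.} By Theorem \ref{thm:optimal_coupling_projection_bound},
\[
\|\pi^\star_{\lambda_x} - P_{X,r}\pi^\star_{\lambda_x}\|_{\mathrm F}^2 \le \frac{\|P_{X,r}C - C\|_\infty}{\lambda_x \mu^X_{r+1}} \to 0.
\]
Since $\Pi_{a,b}$ is compact, take any subsequence along which $\pi^\star_{\lambda_x} \to \hat\pi$. Then $\hat\pi \in \mathcal{R}$. By optimality, and since $\bar\pi\in\mathcal{R}$,
\[
G(\pi^\star_{\lambda_x}) - \langle \pi^\star_{\lambda_x}, P_{X,r}C - C\rangle \le F_{\lambda_x}(\pi^\star_{\lambda_x}) \le F_{\lambda_x}(\bar\pi) = G(\bar\pi),
\]
where the first inequality drops the nonnegative $L_X$-term. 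The correction term $\langle \pi^\star_{\lambda_x}, P_{X,r}C - C\rangle = \langle \pi^\star_{\lambda_x} - P_{X,r}\pi^\star_{\lambda_x}, -C\rangle$ tends to $0$. Since $G$ is continuous on $\Pi_{a,b}$ (entropy is continuous on the simplex), we get $G(\hat\pi) \le G(\bar\pi)$. Uniqueness then gives $\hat\pi = \bar\pi$. Every subsequential limit equals $\bar\pi$, so the whole family converges.

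\textbf{Parts (ii) and (iii).} Part (ii) is symmetric, using right multiplication by $P_{Y,s}$ and the second bound. For (iii), set $\mathcal{R}' = \{\pi : P_{X,r}\pi P_{Y,s} = \pi\}$ and limit objective $\langle \pi, P_{X,r}CP_{Y,s}\rangle - \lambda H(\pi)$. Two-sided averaging keeps the unique minimizer in $\mathcal{R}'$, since entropy does not decrease under averaging. Both bounds of the theorem then force limit points into $\mathcal{R}'$. On $\mathcal{R}'$ both Laplacian terms vanish, so the same sandwich argument applies with both $\lambda_x,\lambda_y\to\infty$ jointly.

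\textbf{Main obstacle.} The delicate point is Step 1: showing the unconstrained minimizer of the limit objective is already block-constant, so that the stated $\argmin$ over all of $\Pi_{a,b}$ matches the constrained limit. This relies on the entropy and $L_Y$ monotonicity under averaging established in the theorem's proof. The key fact there is that $\bar\pi L_Y\bar\pi^\top$ is positive semidefinite, which holds for any coupling, not just $\pi^\star$.
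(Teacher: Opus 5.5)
Your proof is correct and follows the paper's argument. The ingredients match: compactness of $\Pi_{a,b}$, the averaging argument (entropy and $L_Y$-term monotonicity plus uniqueness) showing the limit minimizer satisfies $P_{X,r}\bar\pi=\bar\pi$, the bound of Theorem \ref{thm:optimal_coupling_projection_bound} to kill $\pi^\star-P_{X,r}\pi^\star$, and uniqueness to identify every subsequential limit. The one difference is a small streamlining. You drop the nonnegative term $\lambda_x\langle\pi^\star,L_X\pi^\star\rangle$ and use only the one-sided bound $G(\pi^\star_{\lambda_x})\le G(\bar\pi)+o(1)$, whereas the paper computes the full limit of the objective and must therefore also show $\lambda_x^{(k)}\langle\pi^\star_k,L_X\pi^\star_k\rangle\to 0$, a step your version does not need.
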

\begin{proof}
    To show (i), let $\pi^\star_{\lambda_x}$ denote the optimal coupling for a given $\lambda_x$. Now, consider a sequence $\{\lambda_x^{(k)}\}_{k \in \mathbb{N}}$ such that $\lambda_x^{(k)} \to \infty$ as $k \to \infty$. Since $\Pi_{a, b}$ is compact, there exists a convergent subsequence of $\{\pi^\star_{\lambda_x^{(k)}}\}_{k \in \mathbb{N}}$; by swapping the original sequence with its convergent subsequence, we may assume that $\{\pi^\star_{\lambda_x^{(k)}}\}_{k \in \mathbb{N}}$ converges to some $\tilde{\pi} \in \Pi_{a, b}$. We now claim 
    \begin{equation*}
        \tilde{\pi} = \argmin_{\pi \in \Pi_{a, b}} \langle \pi, P_{X, r} C \rangle + \lambda_y \langle \pi, \pi L_Y \rangle - \lambda H(\pi) =: \pi^\star_\infty.
    \end{equation*}
    We first note that $\pi^\star_\infty = P_{X, r} \pi^\star_\infty$. To see this, as in the proof of Theorem \ref{thm:optimal_coupling_projection_bound}, observe that $P_{X, r} \pi^\star_\infty \in \Pi_{a, b}$, $H(P_{X, r} \pi^\star_\infty) \ge H(\pi^\star_\infty)$, and $\langle P_{X, r} \pi^\star_\infty, P_{X, r} \pi^\star_\infty L_Y \rangle \le \langle \pi^\star_\infty, \pi^\star_\infty L_Y \rangle$. Hence,
    \begin{equation*}
        \langle P_{X, r} \pi^\star_\infty, P_{X, r} C \rangle + \lambda_y \langle P_{X, r} \pi^\star_\infty, P_{X, r} \pi^\star_\infty L_Y \rangle - \lambda H(P_{X, r} \pi^\star_\infty) \le \langle \pi^\star_\infty, P_{X, r} C \rangle + \lambda_y \langle \pi^\star_\infty, \pi^\star_\infty L_Y \rangle - \lambda H(\pi^\star_\infty),
    \end{equation*}
    where we also use $\langle P_{X, r} \pi^\star_\infty, P_{X, r} C \rangle = \langle \pi^\star_\infty, P_{X, r} C \rangle$. By the uniqueness of the minimizer $\pi^\star_\infty$, we conclude that $\pi^\star_\infty = P_{X, r} \pi^\star_\infty$.

    For notational convenience, let $\pi^\star_{\lambda_x^{(k)}} = \pi^\star_k$ for $k \in \N$. Then, by the optimality of $\pi^\star_k$, we have
    \begin{equation}
        \label{eq:optimality_limit}
        \langle \pi^\star_k, C \rangle + \lambda_x^{(k)} \langle \pi^\star_k, L_X \pi^\star_k \rangle + \lambda_y \langle \pi^\star_k, \pi^\star_k L_Y \rangle - \lambda H(\pi^\star_k) \le \langle \pi^\star_\infty, C \rangle + \lambda_y \langle \pi^\star_\infty, \pi^\star_\infty L_Y \rangle - \lambda H(\pi^\star_\infty),
    \end{equation}
    where we use $\langle \pi^\star_\infty, L_X \pi^\star_\infty \rangle = 0$ from $\pi^\star_\infty = P_{X, r} \pi^\star_\infty$. Now, we claim 
    \begin{equation}
        \label{eq:limit_objective}
        \lim_{k \to \infty} \left(\langle \pi^\star_k, C \rangle + \lambda_x^{(k)} \langle \pi^\star_k, L_X \pi^\star_k \rangle + \lambda_y \langle \pi^\star_k, \pi^\star_k L_Y \rangle - \lambda H(\pi^\star_k)\right) = \langle \tilde{\pi}, P_{X, r} C \rangle + \lambda_y \langle \tilde{\pi}, \tilde{\pi} L_Y \rangle - \lambda H(\tilde{\pi}).
    \end{equation}
    To see this, use \eqref{eq:optimal_coupling_projection_bound} to deduce that 
    \begin{equation*}
        \lim_{k \to \infty} P_{X, r} \pi^\star_k = \lim_{k \to \infty} \pi^\star_k = \tilde{\pi}.
    \end{equation*}
    Hence, 
    \begin{equation}
        \label{eq:limit_temp1}
        \lim_{k \to \infty} \langle \pi^\star_k, C \rangle = \lim_{k \to \infty} \langle P_{X, r} \pi^\star_k, C \rangle = \lim_{k \to \infty} \langle \pi^\star_k, P_{X, r} C \rangle = \langle \tilde{\pi}, P_{X, r} C \rangle
    \end{equation}
    and
    \begin{equation}
        \label{eq:limit_temp2}
        \lim_{k \to \infty} \left(\lambda_y \langle \pi^\star_k, \pi^\star_k L_Y \rangle - \lambda H(\pi^\star_k)\right) = \lambda_y \langle \tilde{\pi}, \tilde{\pi} L_Y \rangle - \lambda H(\tilde{\pi}).
    \end{equation}
    Meanwhile, as in the proof of Theorem \ref{thm:optimal_coupling_projection_bound}, we have the following from the optimality of $\pi^\star_k$:
    \begin{equation*}
        \lambda_x^{(k)} \langle \pi^\star_k, L_X \pi^\star_k \rangle \le \langle P_{X, r} \pi^\star_k, C \rangle + \lambda_y \langle P_{X, r} \pi^\star_k, P_{X, r} \pi^\star_k L_Y \rangle - \lambda H(P_{X, r} \pi^\star_k) - \langle \pi^\star_k, C \rangle - \lambda_y \langle \pi^\star_k, \pi^\star_k L_Y \rangle + \lambda H(\pi^\star_k),
    \end{equation*}
    where the right-hand side converges to $0$ as $k \to \infty$ by the previous two limits. Hence, 
    \begin{equation}
        \label{eq:limit_temp3}
        \lim_{k \to \infty} \lambda_x^{(k)} \langle \pi^\star_k, L_X \pi^\star_k \rangle = 0.
    \end{equation}
    Combining \eqref{eq:limit_temp1}, \eqref{eq:limit_temp2}, and \eqref{eq:limit_temp3}, we obtain \eqref{eq:limit_objective}. Then, combining \eqref{eq:optimality_limit} and \eqref{eq:limit_objective}, we have
    \begin{equation*}
        \langle \tilde{\pi}, P_{X, r} C \rangle + \lambda_y \langle \tilde{\pi}, \tilde{\pi} L_Y \rangle - \lambda H(\tilde{\pi}) \le \langle \pi^\star_\infty, P_{X, r} C \rangle + \lambda_y \langle \pi^\star_\infty, \pi^\star_\infty L_Y \rangle - \lambda H(\pi^\star_\infty),
    \end{equation*}
    where we use $\langle \pi^\star_\infty, C \rangle = \langle \pi^\star_\infty, P_{X, r} C \rangle$ from $\pi^\star_\infty = P_{X, r} \pi^\star_\infty$. By the uniqueness of the minimizer $\pi^\star_\infty$, we conclude that $\tilde{\pi} = \pi^\star_\infty$.

    Hence, we have shown that for any sequence $\{\lambda_x^{(k)}\}_{k \in \mathbb{N}}$ such that $\lambda_x^{(k)} \to \infty$, the corresponding sequence of optimal couplings $\{\pi^\star_{\lambda_x^{(k)}}\}_{k \in \mathbb{N}}$ has a convergent subsequence that converges to $\pi^\star_\infty$. As the limit is independent of the choice of the sequence $\{\lambda_x^{(k)}\}_{k \in \mathbb{N}}$, we conclude that $\pi^\star_{\lambda_x} \to \pi^\star_\infty$ as $\lambda_x \to \infty$. This proves (i). The proofs of (ii) and (iii) follow similarly.
\end{proof}

The above results describe an idealized setting in which clusters are represented as connected components of the similarity graphs. In the numerical experiments below, however, the similarity graphs are constructed from RBF kernels and are typically connected, while the degree-based marginals need not be block-constant. In this setting, the appropriate analog of block-constancy is low-frequency smoothness with respect to the graph Laplacians. The following result makes this connection precise: for arbitrary graph Laplacians, the LapOT solution is close to its projection onto the low-frequency eigenspaces of the two graphs, with an error controlled by the regularization parameters and the corresponding spectral gaps. This provides a theoretical explanation for why the coupling obtained by LapOT can be used in RSC to reveal approximate cluster structure even when the input graphs are
connected.

\begin{proposition} \label{prop:proj_second_bound}
    Let $\pi^\star$ be the solution of $\mathsf{LapOT}_{\lambda_x,\lambda_y,\lambda}(a,b,C,L_X,L_Y)$, where $L_X$ and $L_Y$ are arbitrary graph Laplacians having the eigenvalues $\mu_1^X \le \cdots \le \mu_n^X$ and $\mu_1^Y \le \cdots \le \mu_m^Y$ of $L_X$ and $L_Y$, respectively. Let $P_{X, \ell}$ and $P_{Y, h}$ denote the orthogonal projections onto the first $\ell$ and $h$ eigenvectors of $L_X$ and $L_Y$, respectively, as in Theorem \ref{thm:optimal_coupling_projection_bound}. Define
    \[
        \tau_\lambda(C)
        =
        \min_{\pi\in\Pi_{a,b}}
        \{\langle \pi,C\rangle-\lambda H(\pi)\}.
    \]
    Then, for any $\ell < n$ and $h < m$, we have
    \[
        \|\pi^\star - P_{X, \ell} \pi^\star\|_{\mathrm{F}}^2
        \le
        \frac{F(\pi^\star) - \tau_\lambda(C)}{\lambda_x \mu^X_{\ell + 1}}
        \quad 
        \text{and}
        \quad
        \|\pi^\star-\pi^\star P_{Y,h}\|_{\mathrm{F}}^2
        \le
        \frac{F(\pi^\star) - \tau_\lambda(C)}{\lambda_y \mu^Y_{h + 1}},
    \]
    where $F(\pi) = \langle \pi, C \rangle + \lambda_x \langle \pi, L_X \pi \rangle + \lambda_y \langle \pi, \pi L_Y \rangle - \lambda H(\pi)$.
\end{proposition}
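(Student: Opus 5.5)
The plan is to decompose the objective as $F(\pi^\star) = \big(\langle \pi^\star, C\rangle - \lambda H(\pi^\star)\big) + \lambda_x \langle \pi^\star, L_X \pi^\star\rangle + \lambda_y \langle \pi^\star, \pi^\star L_Y\rangle$. We then lower bound the first bracket by $\tau_\lambda(C)$, which holds simply because $\pi^\star \in \Pi_{a,b}$ is feasible for the entropic OT problem defining $\tau_\lambda(C)$. Both Laplacian quadratic forms are nonnegative, since $L_X, L_Y$ are positive semidefinite. Hence
\[
\lambda_x \langle \pi^\star, L_X \pi^\star\rangle \le F(\pi^\star) - \tau_\lambda(C) - \lambda_y \langle \pi^\star, \pi^\star L_Y\rangle \le F(\pi^\star) - \tau_\lambda(C),
\]
and symmetrically $\lambda_y \langle \pi^\star, \pi^\star L_Y\rangle \le F(\pi^\star) - \tau_\lambda(C)$. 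Note that this step does not use optimality of $\pi^\star$ at all, nor any block structure of $a, b$. Unlike the proof of Theorem \ref{thm:optimal_coupling_projection_bound}, we never need $P_{X,\ell}\pi^\star$ to be a coupling. This is what allows arbitrary Laplacians and marginals.

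Next we convert the quadratic form into the projection residual exactly as in the second half of the proof of Theorem \ref{thm:optimal_coupling_projection_bound}. Writing $\tilde\pi = \Phi_X^\top \pi^\star$, we have $\langle \pi^\star, L_X\pi^\star\rangle = \sum_{i,j}\mu_i^X \tilde\pi_{ij}^2$. Dropping the terms with $i \le \ell$ (all $\mu_i^X \ge 0$) and using monotonicity of the eigenvalues gives $\langle \pi^\star, L_X\pi^\star\rangle \ge \mu_{\ell+1}^X \sum_{i>\ell}\sum_j \tilde\pi_{ij}^2$. By orthogonality of $\Phi_X$, the latter sum equals $\|\pi^\star - P_{X,\ell}\pi^\star\|_{\mathrm F}^2$, since $(I - P_{X,\ell})\pi^\star = \Phi_X \begin{bmatrix} 0 \\ \tilde\pi_{\ell+1:n}\end{bmatrix}$.

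Combining the two steps and dividing by $\lambda_x\mu_{\ell+1}^X$ yields the first inequality. The column version follows identically using $\pi^\star L_Y (\pi^\star)^\top$ and the eigenbasis $\Phi_Y$, via $\tilde\pi' = \pi^\star\Phi_Y$. The only subtle point is the case $\mu_{\ell+1}^X = 0$ (for instance, if $\ell$ is less than the number of connected components). There the right-hand side is interpreted as $+\infty$, or the statement is vacuous. I would remark on this rather than treat it as an obstacle. Otherwise there is no genuine difficulty: the main thing to check is that the entropic term is handled correctly, i.e., that $\tau_\lambda(C)$ uses the same $\lambda$ and the same feasible set $\Pi_{a,b}$, so that the comparison is valid.
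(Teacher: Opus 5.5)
Your proposal is correct and follows essentially the same route as the paper: bound $\langle \pi^\star, C\rangle - \lambda H(\pi^\star)$ below by $\tau_\lambda(C)$ using feasibility, then convert $\langle \pi^\star, L_X \pi^\star\rangle$ into the projection residual through the eigenbasis of $L_X$ (and analogously for $L_Y$). You are right that optimality of $\pi^\star$ is never needed. The paper invokes $F(\pi^\star) \le F(\pi)$ but then specializes back to $\pi = \pi^\star$, so that step is superfluous, and your explicit dropping of the nonnegative $\lambda_y$ term and your note on the degenerate case $\mu^X_{\ell+1} = 0$ are harmless clarifications.
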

\begin{proof}
    By optimality, $F(\pi^\star) \le F(\pi)$ for any $\pi \in \Pi_{a, b}$. Since $\langle \pi^\star, C \rangle - \lambda H(\pi^\star)\ge \tau_\lambda(C)$, we obtain
    \[
        \lambda_x \langle \pi^\star, L_X \pi^\star \rangle
        +
        \lambda_y \langle \pi^\star, \pi^\star L_Y \rangle
        \le
        F(\pi) - \tau_\lambda(C).
    \]
    As this is true for any $\pi \in \Pi_{a, b}$, we have 
    \begin{equation*}
        \lambda_x \langle \pi^\star, L_X \pi^\star \rangle
        +
        \lambda_y \langle \pi^\star, \pi^\star L_Y \rangle
        \le
        F(\pi^\star) - \tau_\lambda(C).
    \end{equation*}
    Now, recall the orthogonal decomposition $L_X = \Phi_X \Lambda_X \Phi_X^\top$ in Theorem \ref{thm:optimal_coupling_projection_bound}. Then,
    \[
        \langle \pi^\star, L_X \pi^\star \rangle
        =
        \sum_{i = 1}^{n} \mu_i^X \|e_i^\top \Phi_X^\top \pi^\star\|_2^2
        \ge
        \mu_{\ell + 1}^X \sum_{i = \ell + 1}^{n} \|e_i^\top \Phi_X^\top \pi^\star\|_2^2
        =
        \mu_{\ell + 1}^X
        \|\pi^\star-P_{X, \ell} \pi^\star\|_{\mathrm{F}}^2.
    \]
    Hence, the first bound follows, and the second bound follows analogously.
\end{proof}

Another conclusion from Proposition~\ref{prop:proj_second_bound} is
\[
    \|\pi^\star - P_{X, \ell} \pi^\star P_{Y, h}\|_{\mathrm{F}}
    \le
    \sqrt{\frac{F(\pi^\star) - \tau_\lambda(C)}{\lambda_x \mu^X_{\ell + 1}}}
    +
    \sqrt{\frac{F(\pi^\star) - \tau_\lambda(C)}{\lambda_y \mu^Y_{h + 1}}},
\]
which follows from
\[
    \pi^\star-P_{X,\ell}\pi^\star P_{Y,h}
    =
    (I-P_{X,\ell})\pi^\star
    +
    P_{X,\ell}\pi^\star(I-P_{Y,h})
\]
and the triangle inequality. 

The quantity $F(\pi^\star) - \tau_\lambda(C) \ge 0$ is the gap between the LapOT and entropic-OT optimal values, both computed by the solver. Hence, Proposition \ref{prop:proj_second_bound} thus provides an \emph{a posteriori} certificate: after solving LapOT, one may evaluate the right-hand side directly to verify that $\pi^\star$ concentrates on the low-frequency eigenspaces of $L_X$ and $L_Y$. The bound is non-vacuous whenever this value gap is small relative to $\lambda_x \mu^X_{\ell + 1}$ and $\lambda_y \mu^Y_{h + 1}$. Accordingly, Proposition \ref{prop:proj_second_bound} states that when the similarity graphs have a pronounced eigengap after $\ell$ and $h$ eigenvectors, the optimal coupling is close to a matrix whose rows and columns vary primarily along the corresponding approximate cluster indicators. This provides the theoretical mechanism behind the use of LapOT inside RSC, even when the graphs are connected and the marginals are not block-constant.

To conclude, Theorem~\ref{thm:optimal_coupling_projection_bound} should be viewed as the exact disconnected-graph limit, while Proposition~\ref{prop:proj_second_bound} explains the connected-graph regime used in the experiments in the following sections. In Section \ref{subsec:stockmarket}, we indeed report the upper bound on the right-hand side of Proposition~\ref{prop:proj_second_bound} for the stock market data, numerically confirming the decay of the projection error as the dimension of the low-frequency eigenspaces increases.

\section{Refined Simultaneous Clustering}
\label{sec:rsc}
This section introduces a new approach for clustering two point clouds simultaneously, which we call Refined Simultaneous Clustering (RSC). We have seen in the previous sections that the optimal coupling of LapOT can capture the cluster structure of the two point clouds. RSC is a practical method that distills the cluster structure from the optimal coupling of LapOT and uses it to perform simultaneous clustering of the two point clouds with the goal of making the clustering of both clouds more aligned with each other. 

\begin{algorithm}[!htbp]
    \setstretch{1.25}
    \caption{Refined Simultaneous Clustering}
    \label{alg:RSC}
    \begin{algorithmic}[1]
        \Input matching cost $C \in \mathbb{R}^{n \times m}$, marginal weights $a \in \Delta_n$ and $b \in \Delta_m$.
        \Input similarity matrices $K_X \in \mathbb{R}^{n \times n}$ and $K_Y \in \mathbb{R}^{m \times m}$.
        \Input hyperparameters $\lambda_x, \lambda_y, \lambda$ for regularization and $k, k' \in \N$ for clustering.        
        \State Define $L_X = \mathrm{diag}(K_X 1_n) - K_X$ and $L_Y = \mathrm{diag}(K_Y 1_m) - K_Y$.
        \State Find the solution $\pi^\star$ of $\mathsf{LapOT}_{\lambda_x, \lambda_y, \lambda}(a, b, C, L_X, L_Y)$.
        \State Apply k-means with $k'$ clusters to the rows, columns of $\pi^\star$ to define $\pi_{\mathsf{switch}}^X$, $\pi_{\mathsf{switch}}^Y$ per \eqref{eq:switch_X}, \eqref{eq:switch_Y}.
        \State $\tilde{K}_X \gets K_X \odot \pi_{\mathsf{switch}}^X$ and $\tilde{K}_Y \gets K_Y \odot \pi_{\mathsf{switch}}^Y$
        \State Update the graph Laplacians: $\tilde{L}_X \gets \mathrm{diag}(\tilde{K}_X 1_n) - \tilde{K}_X$ and $\tilde{L}_Y \gets \mathrm{diag}(\tilde{K}_Y 1_m) - \tilde{K}_Y$
        \State Apply spectral clustering with $k$ clusters to $\tilde{L}_X$ and $\tilde{L}_Y$ to obtain the final clusters of $X$ and $Y$.
        \Output Final clusters of $X$ and $Y$.
    \end{algorithmic}
\end{algorithm}

The full procedure of RSC is summarized in Algorithm \ref{alg:RSC}. After obtaining the optimal coupling $\pi^\star$ from LapOT, we apply k-means to the rows and columns of $\pi^\star$ and define the switch matrices $\pi^X_{\mathsf{switch}} \in \{0, 1\}^{n \times n}$ and $\pi^Y_{\mathsf{switch}} \in \{0, 1\}^{m \times m}$ as follows:
\begin{align}
    (\pi^X_{\mathsf{switch}})_{i j} & = \begin{cases}
    1 & \text{if} ~ \text{row} ~ i ~ \text{and} ~ \text{row} ~ j ~ \text{belong to the same cluster}, \\
    0 & \text{otherwise},
    \end{cases} \label{eq:switch_X} \\
    (\pi^Y_{\mathsf{switch}})_{ij} & = \begin{cases}
    1 & \text{if} ~ \text{column} ~ i ~ \text{and} ~ \text{column} ~ j ~ \text{belong to the same cluster}, \\
    0 & \text{otherwise}.
    \end{cases} \label{eq:switch_Y}
\end{align}
By applying k-means to the rows (resp.\ columns) of $\pi^\star$, we cluster together points in $X$ (resp.\ $Y$) that have a similar fuzzy matching with $Y$ (resp.\ $X$). Then, in Step 4 of Algorithm \ref{alg:RSC}, we refine the original input similarity matrices $K_X$ and $K_Y$ by multiplying them element-wise with the switch matrices $\pi^X_{\mathsf{switch}}$ and $\pi^Y_{\mathsf{switch}}$, respectively. Then, we recalculate the refined graph Laplacians $\tilde{L}_X$ and $\tilde{L}_Y$ based on the refined similarity matrices $\tilde{K}_X$ and $\tilde{K}_Y$, followed by the usual spectral clustering procedure to obtain the final clusters of $X$ and $Y$. Here, the idea is to condition the final clustering procedure on the similarity matrices refined by the switch matrices, expecting the cluster formation in $X$ and $Y$ to reflect the matching relations between the clouds, and hence, to be more aligned with each other. 

It is worth noting the similarity between RSC and the standard spectral clustering procedure. Spectral clustering is a two-step procedure that first obtains suitable spectral embeddings via various methods like Laplacian eigenmaps \cite{belkin2001laplacian} or diffusion maps \cite{coifman2006diffusion}, and then applies k-means to the spectral embeddings to obtain the final clusters of a single point cloud. In RSC, we first obtain the optimal coupling of LapOT, which encodes the cluster-aware matching information between the two point clouds, allowing us to leverage the shared information between the two point clouds in the subsequent clustering step. 

Figure \ref{fig:RSC_motivating}(b) earlier in the paper shows an example of the RSC method applied to two point clouds $X$ and $Y$ taken from the CAPOD dataset \cite{papadakis2014canonically}. Here, Algorithm \ref{alg:RSC} is applied with $k' = 3$ for the switch matrices in Step 3 and $k = 5$ for the final spectral clustering in Step 6. Unlike the independent clustering of $X$ and $Y$ shown in Figure \ref{fig:RSC_motivating}(a), the RSC method correctly captures the matching cluster structure between the two clouds, resulting in aligned clusters. The input similarity matrices are computed using the RBF kernel, while the marginal weights are based on the degrees from the similarity matrices as explained below. The cost matrix is computed using the Wasserstein-1 distance between the distance profiles of points in $X$ and $Y$ as described in Section \ref{sec:preliminaries}.

\paragraph{Switch Matrices.}
The role of the switch matrices $\pi^X_{\mathsf{switch}}$ and $\pi^Y_{\mathsf{switch}}$ is to capture the coarse cluster structure of the two point clouds based on the optimal coupling $\pi^\star$ before moving on to the final clustering step. Hence, the choice of the number of clusters $k'$ for the switch matrices governs the rough partition of the point clouds, distilling the cluster structure of the match encoded by the optimal coupling of LapOT. In practice, we choose $k'$ to be slightly more than half the number of clusters $k$ for the final clustering step, so that the switch matrices capture the coarse cluster structure of the two clouds, while the subsequent clustering step can still produce clusters with enough granularity. Figure \ref{fig: non-uni cluster by pi switch} visualizes the 3D human shapes in Figure \ref{fig:RSC_motivating} clustered according to the switch matrices $\pi^X_{\mathsf{switch}}$ and $\pi^Y_{\mathsf{switch}}$ with $k' = 3$. We can see that the rough cluster structure of the two clouds is captured by the switch matrices, providing a good starting point for the final clustering step.

\begin{figure}[!htbp]
  \centering
  \includegraphics[width=0.48\linewidth, trim=0.6in 0.6in 0.6in 0.6in, clip]{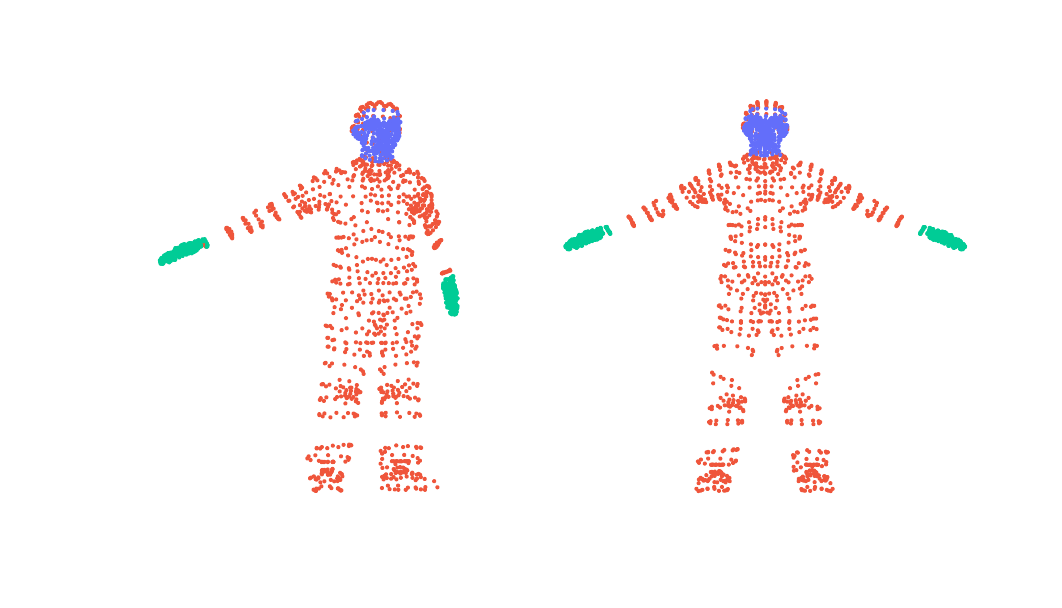}
  \caption{Clustering of two 3D human shapes according to the clustering structure encoded in the switch matrices $\pi^X_{\mathsf{switch}}$ and $\pi^Y_{\mathsf{switch}}$.}
  \label{fig: non-uni cluster by pi switch}
\end{figure}

\paragraph{Degree-Based Marginals.}
While it is common to set $a, b$ to be uniform distributions in practice, we found that using degree-based marginals can improve the performance of RSC. The idea is to assign higher weights to points that are more significant in the similarity graph, which can help to capture the cluster structure of the point clouds more effectively. Precisely, we define the degree-based marginals as follows:
\begin{equation*}
    a_i = \frac{\mathrm{deg}(X_i)}{\sum_{i' = 1}^n \mathrm{deg}(X_{i'})} \quad \text{and} \quad b_j = \frac{\mathrm{deg}(Y_j)}{\sum_{j' = 1}^m \mathrm{deg}(Y_{j'})},
\end{equation*}
where $\mathrm{deg}(X_i) = \sum_{i' = 1}^n (K_X)_{i i'}$ and $\mathrm{deg}(Y_j) = \sum_{j' = 1}^m (K_Y)_{j j'}$ are the degrees of points $X_i$ and $Y_j$ in the similarity graphs, respectively. In this case, the distance profiles to define the cost matrix $C$ as described in Section \ref{sec:preliminaries} are also based on the degree-based marginals.

\paragraph{Discussion.}
Figure \ref{fig:dogs_dolphins} shows additional examples of RSC applied to two point clouds from the CAPOD dataset. The first row shows two 3D dog shapes, while the second row shows two 3D dolphin shapes. In both cases, RSC produces more consistent clusters across the two point clouds compared to independent clustering. On top of these, we also experimented with the low-rank version of LapOT in Step 2 of Algorithm \ref{alg:RSC} with $r = 10$, which also yields consistent clusters across the two point clouds. Overall, these results demonstrate the effectiveness of RSC in capturing the cluster structure of two point clouds simultaneously, leveraging the shared information between them to produce more aligned clusters. Of course, like any clustering method, RSC is not guaranteed to produce perfectly consistent clusters across the two point clouds and depends on the choice of hyperparameters and similarity matrices. We leave a more detailed study of the hyperparameter selection for RSC to future work.

\begin{figure}[!htbp]
  \centering
  \includegraphics[width=0.318\textwidth, trim=0.3in 0.5in 0.3in 0.5in, clip]{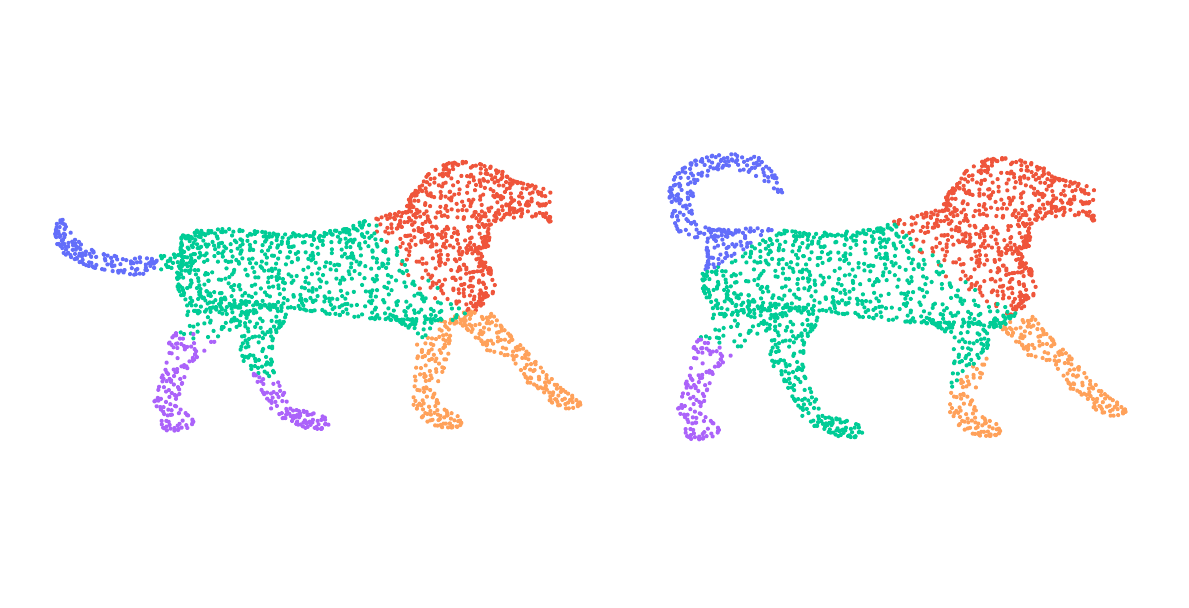}
  \includegraphics[width=0.318\textwidth, trim=0.3in 0.5in 0.3in 0.5in, clip]{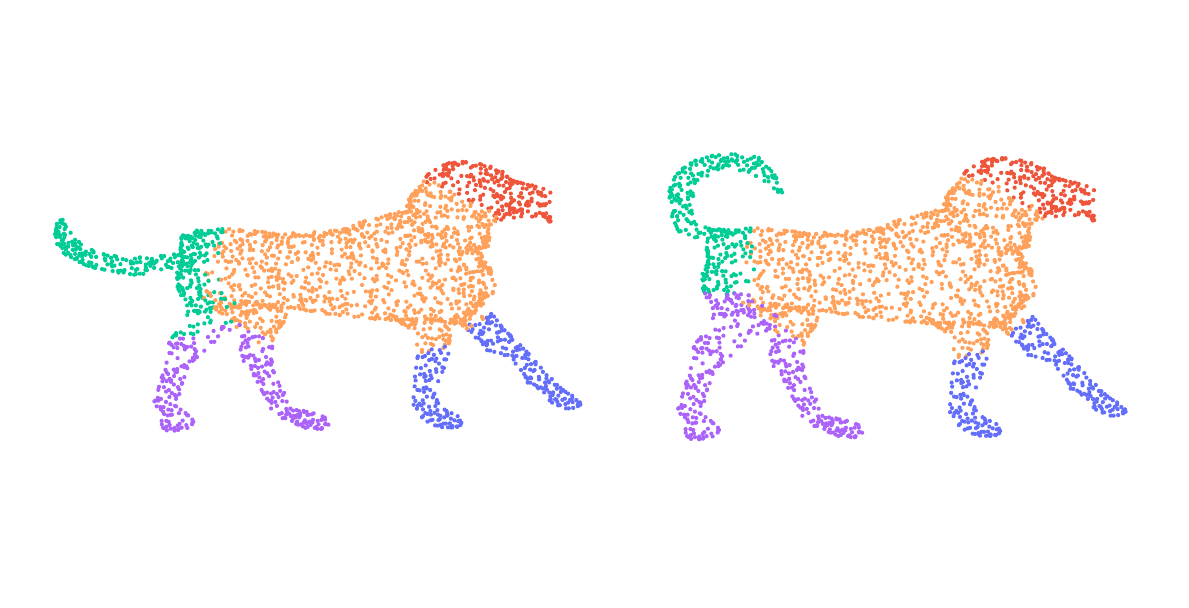}
  \includegraphics[width=0.318\textwidth, trim=0.3in 0.5in 0.3in 0.5in, clip]{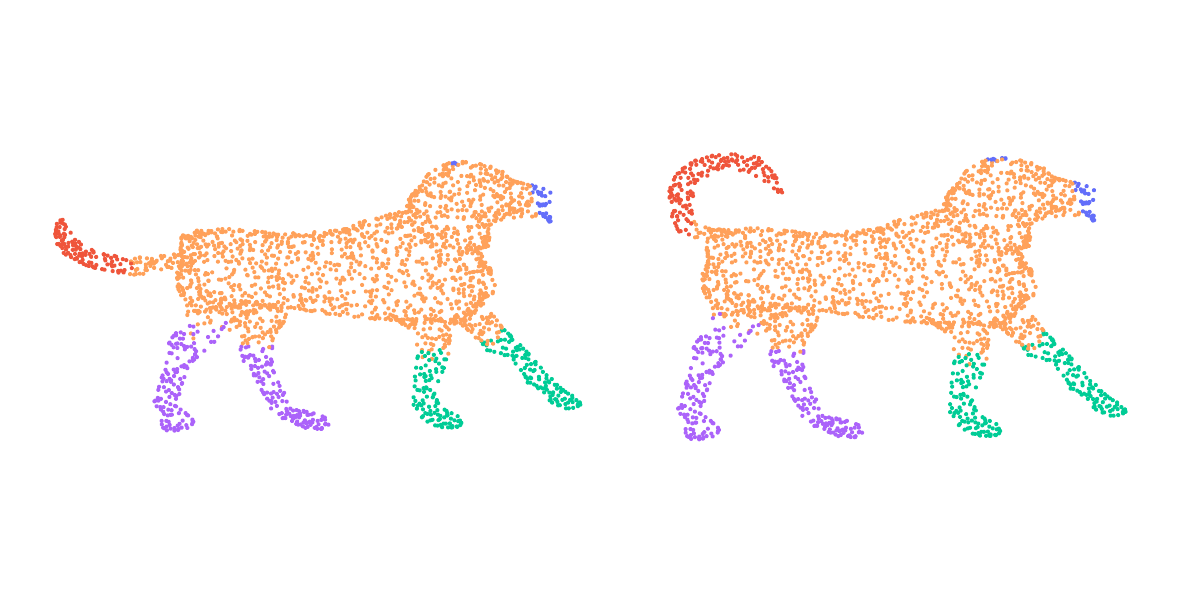}

  \subfloat[Independent Clustering]{
    \includegraphics[width=0.318\textwidth, trim=0.3in 0.5in 0.3in 0.5in, clip]{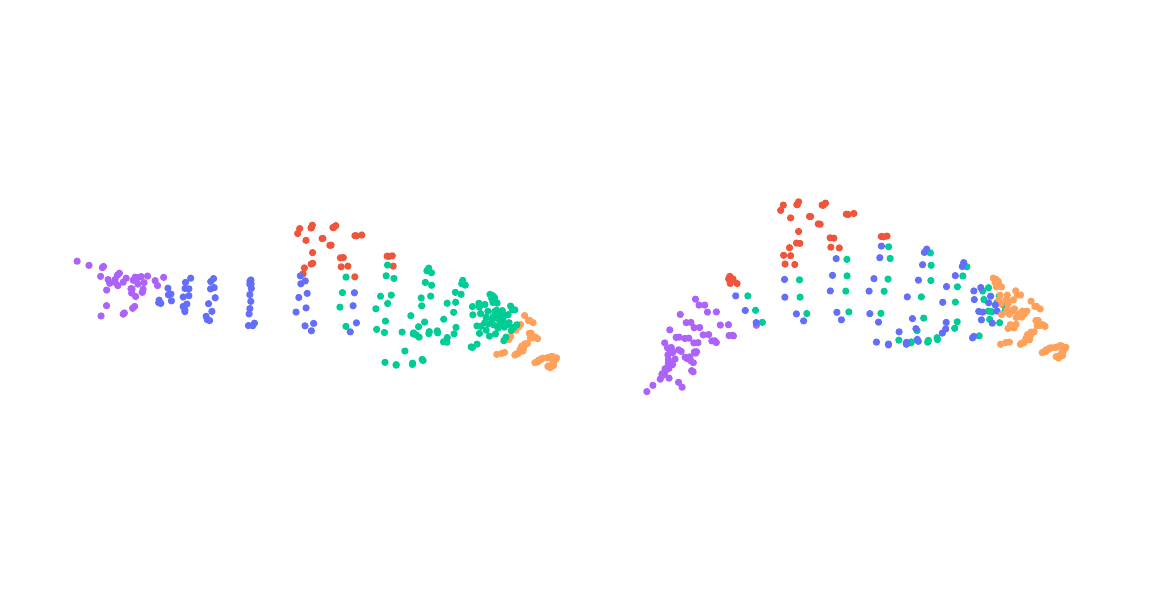}
  }
  \subfloat[RSC]{
    \includegraphics[width=0.318\textwidth, trim=0.3in 0.5in 0.3in 0.5in, clip]{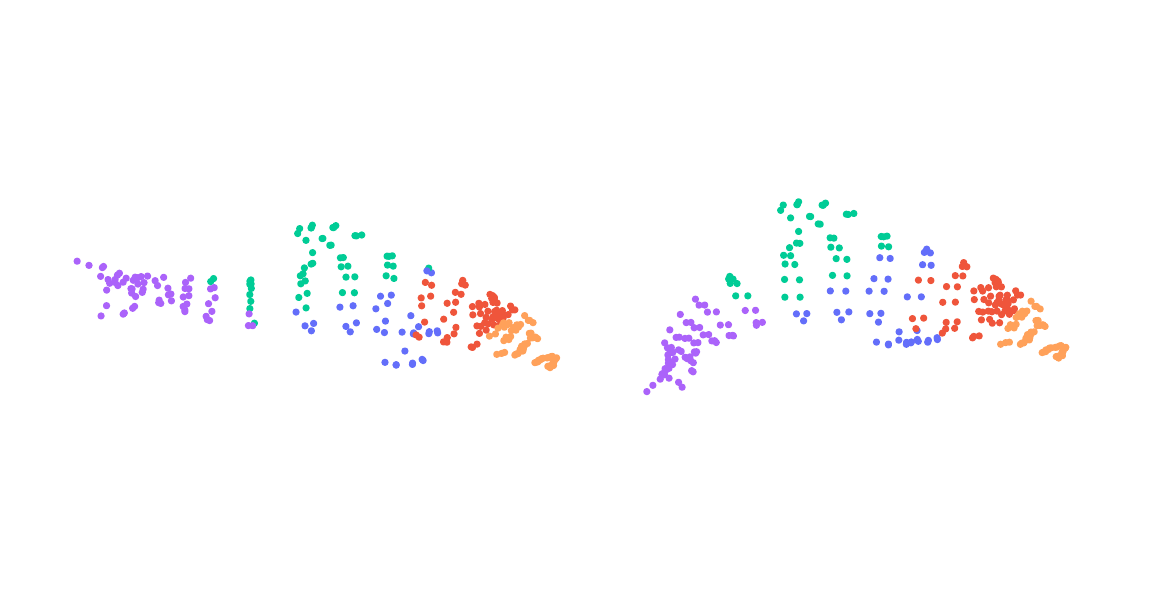}
  }
  \subfloat[RSC (Low-Rank)]{
    \includegraphics[width=0.318\textwidth, trim=0.3in 0.5in 0.3in 0.5in, clip]{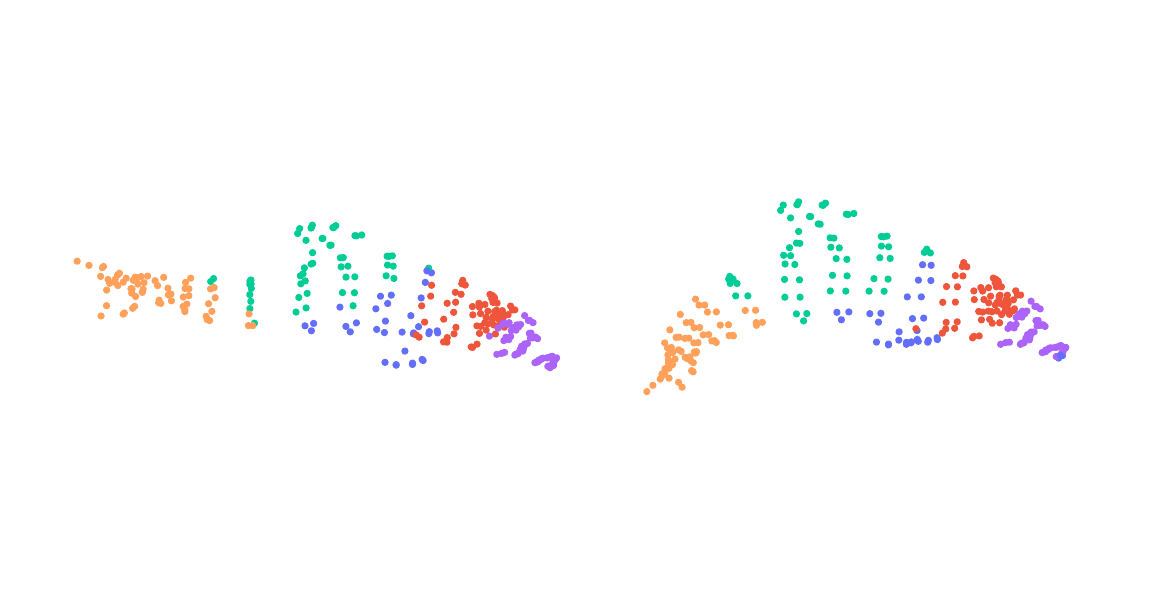}
  }
  \caption{Other examples of RSC applied to two point clouds from the CAPOD dataset. The first row shows two 3D dog shapes, while the second row shows two 3D dolphin shapes. (c) shows the results of RSC with the low-rank version of LapOT in Step 2 of Algorithm \ref{alg:RSC} with $r = 10$.}
  \label{fig:dogs_dolphins}
\end{figure}

\section{Applications}
\label{sec:applications}
\subsection{Alignment via Refined Simultaneous Clustering}
Finding a rigid transformation that best aligns two point clouds is a fundamental problem in many applications, including computer vision, robotics, and 3D modeling. Once we establish a correspondence between the two clouds, we can formulate an orthogonal Procrustes problem to estimate the underlying rigid transformation. In this section, we demonstrate how this can be done by applying RSC. 

We pick a 3D human shape from the CAPOD dataset \cite{papadakis2014canonically} as our point cloud $X$ and generate a second point cloud $Y$ by applying a random rotation to $X$ and adding isotropic Gaussian noise $N(0, \sigma^2 I_3)$, which is shown in Figure \ref{fig:alignment}(a). We begin by applying RSC, following the same hyperparameter selection procedure used to produce Figure \ref{fig:RSC_motivating}(b) as described in Section \ref{sec:rsc}; the result is shown in Figure \ref{fig:alignment}(b). Then, we calculate the centroids of each cluster in $X$ and $Y$ and assign to each centroid a weight proportional to the sum of degrees of the points in its cluster. Now, we consider matching between the centroids of the clusters in $X$ and $Y$ following the usual distance profile matching \eqref{eq:discrete_ot} in Section \ref{sec:preliminaries}. Then, for $k = 5$ matched cluster pairs, we perform the pointwise matching between the points in the matched clusters following the distance profile matching again, which can be done in parallel for each matched cluster pair. Finally, based on the established correspondence between the two clouds, we estimate the underlying rotation by solving the orthogonal Procrustes problem.

\begin{figure}[ht]
    \centering
    \subfloat[Point Clouds]{
        \includegraphics[width=0.48\textwidth, trim=0.6in 0.6in 0.6in 0.6in, clip]{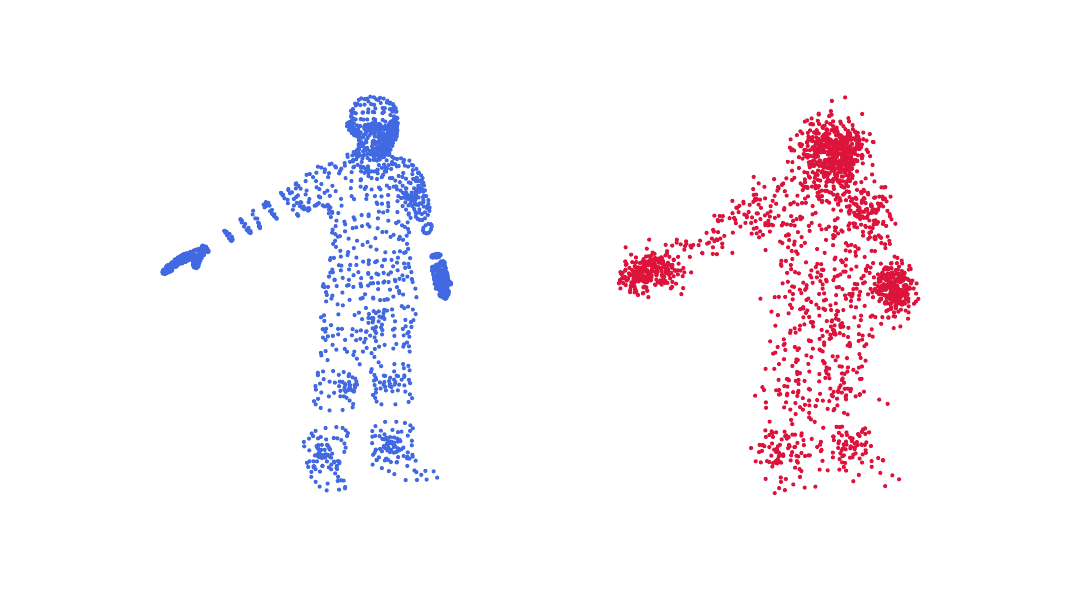}
    }
    \subfloat[RSC]{
        \includegraphics[width=0.48\textwidth, trim=0.6in 0.6in 0.6in 0.6in, clip]{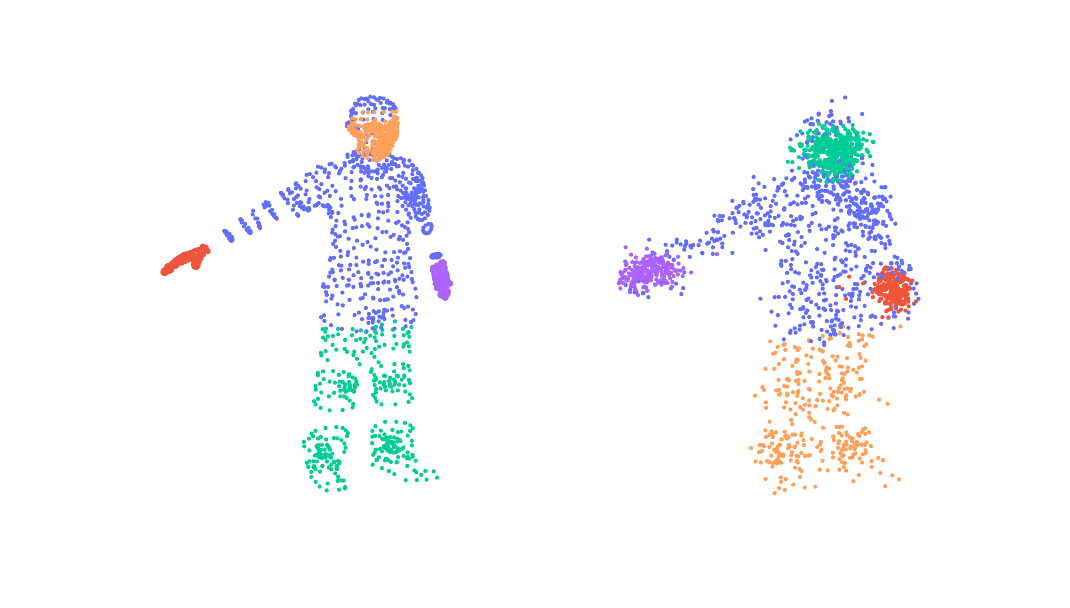}
    }
    \caption{Point clouds and the result of RSC. (a) shows the point cloud of a human (left) and the same point cloud rotated and with noise added (right). (b) shows the result of RSC applied to the clouds in (a) with $k = 5$ clusters. For presentation purposes, the clouds are rotated to share the same viewing angle.}
    \label{fig:alignment}
\end{figure}

The main contribution here is to avoid global matching between the two clouds, which can be computationally expensive and fail to capture the local structure of the clouds. Instead, we simultaneously cluster the clouds in a consistent manner via RSC and establish a correspondence between the matched clusters, thereby breaking the global matching problem into smaller subproblems that can be solved simultaneously. 

For a varying level of noise $\sigma^2$, we report the relative error in spectral norm between the estimated and true underlying rotation. We compare the proposed method with two alternative approaches: (i) an ICP \cite{besl1992method} pipeline starting with fast point feature histograms \cite{Rusu2009FastPF} and RANSAC algorithms for a coarse global correspondence estimate, followed by the generalized ICP algorithm \cite{segal2010generalized}, and (ii) the global Distance Profile Matching (DPM) between clouds $X$ and $Y$ as formulated in \cite{hur2025robust}, followed by the usual orthogonal Procrustes problem. Here, the noise level is converted to a signal-to-noise ratio (SNR) in decibels, defined as $\text{SNR}_{\text{db}} = 10 \log_{10}(\sigma_X^2 / \sigma^2)$, where $\sigma_X^2$ is the variance of the point cloud $X$ and $\sigma^2$ is the variance of the added noise. We repeat the comparison test over ten independent runs and report the mean relative error in Table~\ref{tab:method_errors}. First, we note that for a low to moderate presence of noise, our approach and the global DPM approach perform remarkably well, while the ICP pipeline quickly deteriorates as the noise level increases. For higher noise levels, our method is more robust than the global DPM approach. When using global DPM in a high-noise setup, there are no restrictions on the match, and a point can potentially be matched with every other point in the opposite cloud, which can lead to a less accurate match. In contrast, our method restricts the match to clusters that are more likely to be similar, thereby yielding a more accurate match.

\begin{table}[!htbp]
    \centering
    \begin{tabular}{|c|c|c|c|c|c|c|c|}
        \hline
        \diagbox{\textbf{Methods}}{$\textbf{SNR}_{\textbf{db}}$} 
        & \textbf{24.91} 
        & \textbf{18.89} 
        & \textbf{15.37} 
        & \textbf{10.90} 
        & \textbf{4.92} 
        & \textbf{2.91} \\ 
        \hline

        ICP Pipeline
        & 0.03285 
        & 0.37441  
        & 1.15744  
        & 1.67079 
        & 2.24804
        & 2.51093 \\ 
        \hline

        Global DPM 
        & {0.00288} 
        & {0.00722} 
        & {0.01460} 
        & {0.02889} 
        & 0.12395 
        & 0.25521 \\ 
        \hline

        Our Method 
        & 0.00668 
        & 0.02421 
        & 0.03134 
        & 0.09283  
        & {0.10219} 
        & {0.20242} \\ 
        \hline
    \end{tabular}
    \caption{Mean spectral-norm relative error between the estimated and true underlying rotations over ten independent runs, for varying noise levels $\sigma^2$ expressed in $\text{SNR}_{\text{db}}$.}
    \label{tab:method_errors}
\end{table}

\subsection{Non-Euclidean Application: RSC on High-Dimensional Stock Market Data} 
\label{subsec:stockmarket}
So far, we have focused on point clouds under the Euclidean metric to define the distance profiles. However, as profiles can be defined under any suitable similarity measure, we can also apply RSC beyond the usual Euclidean setting. In this section, we demonstrate the application of RSC to high-dimensional stock market data, where the similarity measure is based on the correlation between the returns of stocks.

Here, our goal is to cluster the top 50 companies from S\&P 500 (USA) and the top 50 companies from the Japanese stock market simultaneously, in order to analyze the similarity relations in the stock market between the two countries. For each company, we consider the daily closing price of its stock over a period of 5 years starting from the first day of 2020, which gives us a time series of length $T  = 1256$ (the number of trading days in 5 years), say, $(P(1), \ldots, P(T))$. We normalize the time series so that $P(1) = 1$ for all stocks. We do that to neutralize scaling problems that can occur from differences in currencies (US dollars to Japanese yen) and different starting points in value. We then compute the daily returns of each stock as the percentage change in price from the previous day, namely, $R(t) = (P(t) - P(t - 1)) / P(t - 1)$ for $t = 2, \ldots, T$ and $R(1)=0$. We take $(R(1),R(2), \ldots, R(T)) \in \R^{T}$ as the feature vector for each stock, which gives us a point cloud of size $n = 50$ in $\R^{T}$ for both the S\&P 500 and Japanese companies.

\begin{figure}[!htbp]
    \centering
    \includegraphics[width=0.495\linewidth]{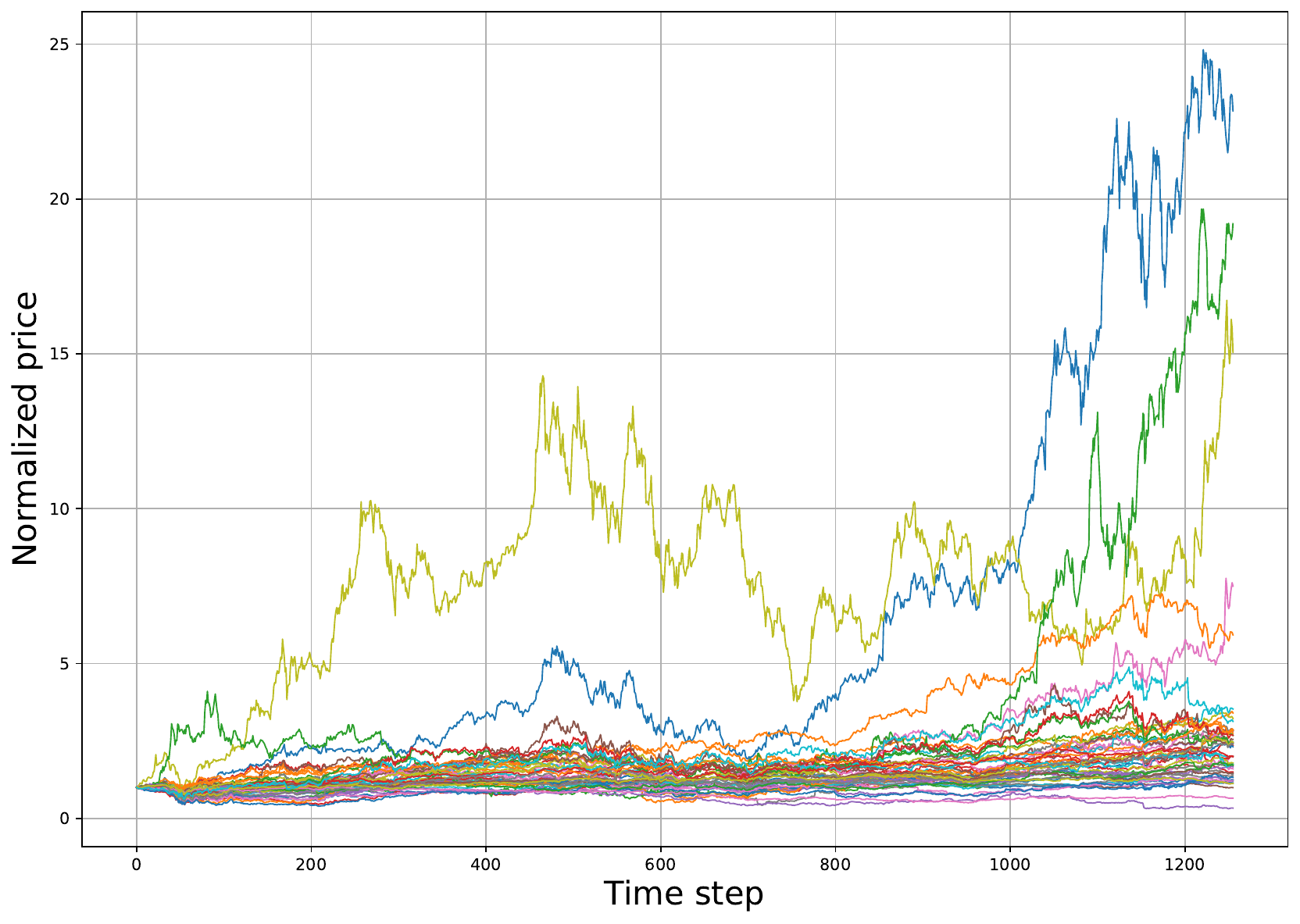}
    \includegraphics[width=0.495\linewidth]{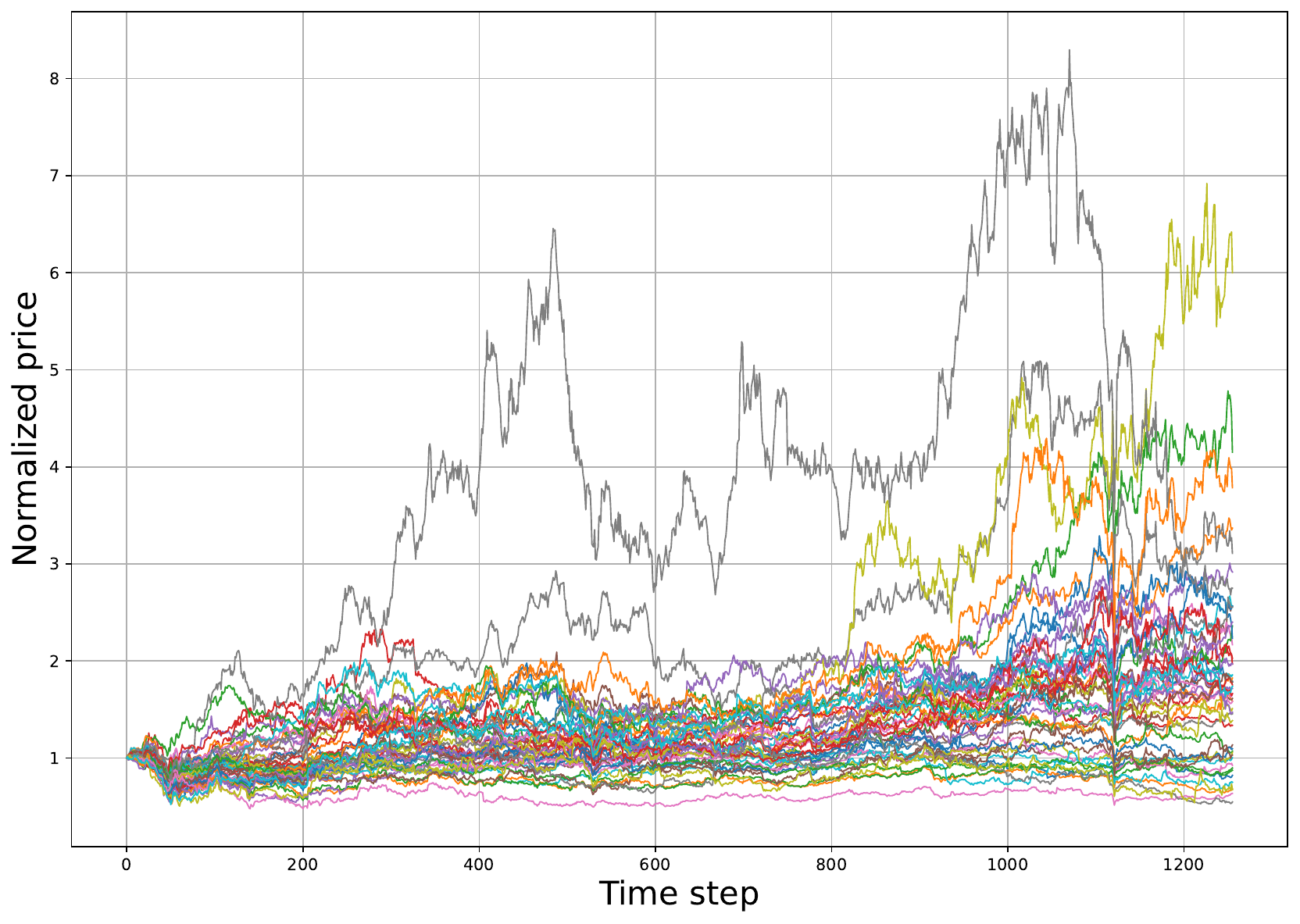}
    \caption{Normalized daily stock prices for the top 50 companies from the S\&P 500 (left) and top 50 Japanese companies by market cap (right) over five years from 2020.}
    \label{fig:stock_data}
\end{figure}

Let $R_i = (R_i(1), \ldots, R_i(T))$ for $i = 1, \ldots, n$ be the returns of the $n$ companies in S\&P 500. For $i, i' = 1, \ldots, n$, we define the covariance between the returns of companies $i$ and $i'$ as
\begin{equation*}
    \mathrm{Cov}(R_i, R_{i'}) = \frac{1}{T - 1} \sum_{t = 1}^{T} (R_i(t) - \bar{R}_i)(R_{i'}(t) - \bar{R}_{i'}),
\end{equation*}
where $\bar{R}_i = \frac{1}{T} \sum_{t = 1}^{T} R_i(t)$ is the mean return of company $i$. Then, the correlation between the returns of companies $i$ and $i'$ is defined as $\rho_{i i'} := \frac{\mathrm{Cov}(R_i, R_{i'})}{\sqrt{\mathrm{Var}(R_i) \mathrm{Var}(R_{i'})}}$, where $\mathrm{Var}(R_i) = \mathrm{Cov}(R_i, R_i)$ is the variance of the returns of company $i$. Finally, we define the similarity $d_{i i'}$ between companies $i$ and $i'$ as $d_{i i'} = \sqrt{2(1 - \rho_{i i'})}$, which we treat as a distance measure between the two companies. Note that when $R_i$ and $R_{i'}$ are positively correlated, $d_{i i'}$ is small. We define the similarity in the same way for the Japanese companies. This distance is then transformed into a similarity score using an RBF kernel as before, from which the similarity matrices $K_X$ and $K_Y$ are constructed. 

Since the correlation-based RBF graphs in this example are connected, the exact block-constancy guarantee of Theorem~\ref{thm:optimal_coupling_projection_bound} is best interpreted as an idealized limiting case. The relevant mechanism here is Proposition~\ref{prop:proj_second_bound}: LapOT suppresses oscillatory components of the coupling over the two market similarity graphs, so broad sector-level structure can appear as an approximately low-rank or block-structured coupling.

Unlike the previous examples, we construct the cost matrix $C$ for LapOT using different similarity profiles than the ones used to construct the similarity matrices $K_X$ and $K_Y$. This is because we want to capture different financial characteristics of the stocks in the two countries. Specifically, we take the beta of each stock as a measure of its systematic risk, a measure of a stock's volatility in relation to the overall market. The beta of stock $i$ is $\beta_i = \mathrm{Cov}(R_i, R_{\text{market}}) / \mathrm{Var}(R_{\text{market}})$, where $R_{\text{market}}$ is the return of the market index. The cost $C_{i j}$ is then set to the Wasserstein-1 distance between the similarity profiles that are picked so that $W_1(\mu_i,\nu_j)$ is an upper bound on $|\beta_i - \beta_j|$. This choice of cost encourages matching stocks with similar systematic risk profiles. The formulation of the specific choice for $\mu_i$ and $\nu_j$ can be found in Section \ref{appendix:stock market} in the appendix.

\begin{figure}[!bp]
    \centering
    \includegraphics[trim=0.1cm 0.2cm 0cm 0.2cm, clip=true, width=0.9\textwidth]{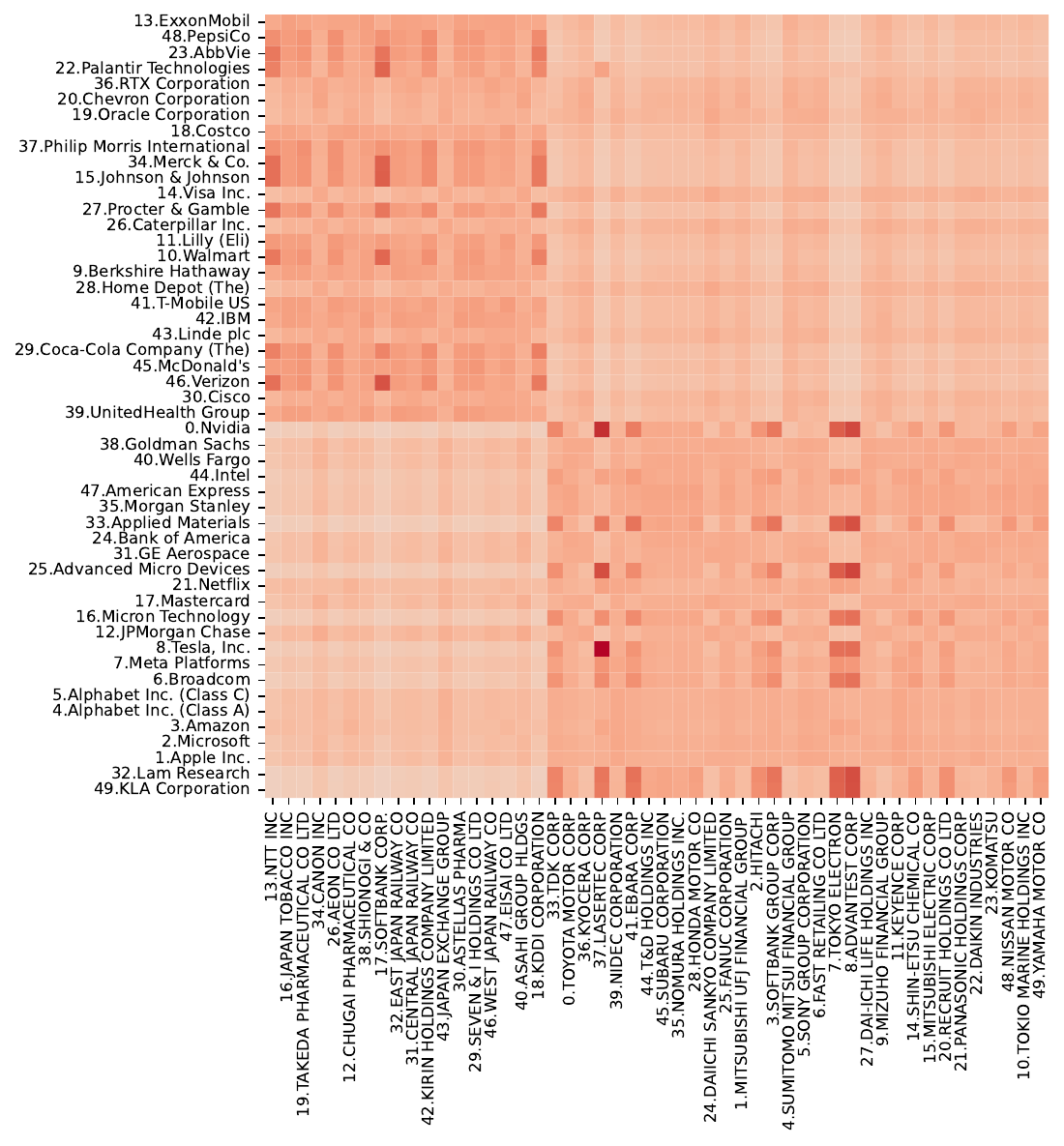}
    \caption{The solution of LapOT applied to the US-Japan stock market data. We permuted the rows and columns of the optimal coupling to better visualize its low-rank structure.}
    \label{fig: OTQuad US-Japan}
\end{figure}

With this setup, we expect LapOT to produce a coupling that aligns companies with similar sector-based behaviors (captured by $L_X, L_Y$) and similar market volatility profiles (captured by $C$). In Figure \ref{fig: OTQuad US-Japan}, we show the low-rank structure of the optimal coupling solution to \eqref{eq:LapOT}, indicating broad cluster similarities between the two countries. To complement the low-rank structure observed in Figure~\ref{fig: OTQuad US-Japan}, we numerically evaluate the upper bound in Proposition~\ref{prop:proj_second_bound} for increasing values of $\ell$ and $h$. Figure~\ref{fig:theoretical bound stock} provides direct numerical support for the proposition: the theoretical bound uniformly controls the projection error and captures its decay as the dimensions of the retained low-frequency eigenspaces increase.

In the appendix, Figures \ref{fig:sp500 clusters} and \ref{fig:Japan clusters} show the results of applying RSC to the companies from the USA and Japan, respectively, showing a consistent pattern in the results.

\begin{figure}[!htbp]
    \centering
    \includegraphics[trim=0.1cm 0.2cm 0cm 0.2cm, clip=true, width=0.5\textwidth]{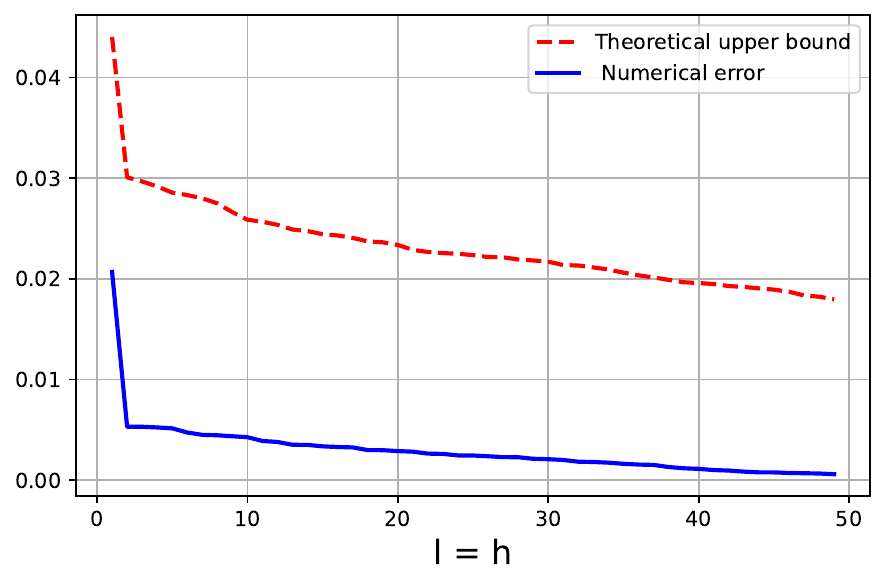}
    \caption{Comparison between the theoretical upper bound and the projection error $\|\pi^\star - P_{X,\ell}\pi^\star P_{Y,h}\|_{\mathrm{F}}$ of the optimal coupling $\pi^\star$, evaluated for $\ell=h=1,\ldots,n-1$.
    }
    \label{fig:theoretical bound stock}
\end{figure}

\bibliography{bib}
\bibliographystyle{plain}

\appendix
\section{Optimization of Low-Rank LapOT}
\label{sec:lr_LapOT_optimization}
This section discusses the optimization scheme for solving the low-rank version of the LapOT problem \eqref{eq:lr_LapOT}. As noticed in \cite{scetbon2021low}, the division by $g$ in the objective function of \eqref{eq:lr_LapOT} can lead to numerical instability. To mitigate this issue, \cite{scetbon2021low} suggests restricting the search domain to 
\begin{equation*}
    \cC(a, b, r, \alpha) := \{(U, V, g) \in \cC(a, b, r) : g \ge \alpha 1_r\},
\end{equation*}
where $\alpha > 0$ is a user-defined stability parameter. This restriction prevents any entry of $g$ from being too close to zero. Hence, a stable formulation of the low-rank LapOT problem \eqref{eq:lr_LapOT} is given by
\begin{equation}
    \label{eq:lr_LapOT_stable}
    \min_{(U, V, g) \in \cC(a, b, r, \alpha)} f(U, V, g),
\end{equation}
where $f(U, V, g)$ is the objective function in \eqref{eq:lr_LapOT}, namely,
\begin{equation*}
    \begin{split}
        f(U, V, g) 
        & = \langle U \mathrm{diag}(1_r / g) V^\top, C \rangle \\
        & \quad + \lambda_x \langle U \mathrm{diag}(1_r / g) V^\top, L_X U \mathrm{diag}(1_r / g) V^\top \rangle + \lambda_y \langle U \mathrm{diag}(1_r / g) V^\top, U \mathrm{diag}(1_r / g) V^\top L_Y \rangle \\
        & \quad - \lambda H(U) - \lambda H(V) - \lambda H(g).
    \end{split}
\end{equation*}
Algorithm \ref{alg:lot_quad} presented at the end of this section summarizes the optimization scheme for solving \eqref{eq:lr_LapOT_stable}.

To understand Algorithm \ref{alg:lot_quad}, let us start by following \cite{scetbon2021low} to rewrite the feasible set $\cC(a, b, r, \alpha)$ as the intersection of two sets $\cC_1(a, b, r, \alpha)$ and $\cC_2(r)$, where 
\begin{align*}
    \cC_1(a, b, r, \alpha) & \triangleq \{(U, V, g) \in \R_+^{n \times r} \times \R_+^{m \times r} \times \R_+^r : U 1_r = a, V 1_r = b, g \ge \alpha 1_r\}, \\
    \cC_2(r) & \triangleq \{(U, V, g) \in \R_+^{n \times r} \times \R_+^{m \times r} \times \R_+^r : U^\top 1_n = V^\top 1_m = g\},
\end{align*}
which allows us to apply Dykstra's algorithm for projection onto the intersection of convex sets. As in \cite{scetbon2021low}, we implement mirror descent under the KL divergence to solve \eqref{eq:lr_LapOT_stable}. More precisely, for $x_k \triangleq (U_k, V_k, g_k)$ and step size $\gamma_k > 0$ at iteration $k$, we update $x_{k + 1}$ by solving the following optimization problem:
\begin{equation*}
    x_{k + 1} = \argmin_{x \in \cC(a, b, r, \alpha)} \langle \nabla f(x_k), x \rangle + \frac{1}{\gamma_k} \mathsf{KL}(x \, \| \, x_k),
\end{equation*}
which is equivalent to 
\begin{equation}
    \label{eq:mirror_descent_update}
    x_{k + 1} = \argmin_{x \in \cC(a, b, r, \alpha)} \mathsf{KL}\left(x \, \| \, x_k \odot \exp(-\gamma_k \nabla f(x_k))\right),
\end{equation}
where $\odot$ denotes the element-wise product, and the exponential is taken element-wise. Here, the gradient $\nabla f(U, V, g)$ is given by
\begin{align*}
    \nabla_U f & = M V \mathrm{diag}(1_r / g) + \lambda \log(U), \\
    \nabla_V f & = M^\top U \mathrm{diag}(1_r / g) + \lambda \log(V ), \\
    \nabla_g f & = -\frac{\mathrm{diag}(U^\top M V)}{g^2} + \lambda \log(g),
\end{align*}
where $g^2$ is the vector obtained by squaring each entry of $g$, and $M$ is given by
\begin{equation*}
    M = 2 \lambda_x L_X U \mathrm{diag}(1_r / g) V^\top + 2 \lambda_y U \mathrm{diag}(1_r / g) V^\top L_Y + C.
\end{equation*}
The projection onto $\cC(a, b, r, \alpha)$ under the KL divergence in \eqref{eq:mirror_descent_update} can be efficiently computed using Dykstra's algorithm, which alternates between projecting onto $\cC_1(a, b, r, \alpha)$ and $\cC_2(r)$. To see this, define
\begin{align*}
    \tilde{U}_k & \triangleq U_k \odot \exp(-\gamma_k \nabla_U f(U_k, V_k, g_k)) = U_k \odot \exp\left(-\gamma_k \left(M_k V_k \mathrm{diag}(1_r / g_k) + \lambda \log(U_k)\right)\right), \\
    \tilde{V}_k & \triangleq V_k \odot \exp(-\gamma_k \nabla_V f(U_k, V_k, g_k)) = V_k \odot \exp\left(-\gamma_k \left(M_k^\top U_k \mathrm{diag}(1_r / g_k) + \lambda \log(V_k)\right)\right), \\
    \tilde{g}_k & \triangleq g_k \odot \exp(-\gamma_k \nabla_g f(U_k, V_k, g_k)) = g_k \odot \exp\left(-\gamma_k \left(- \frac{\mathrm{diag}(U_k^\top M_k V_k)}{g_k^2} + \lambda \log(g_k)\right)\right),
\end{align*}
where $M_k$ is defined as
\begin{equation*}
    M_k = 2 \lambda_x L_X U_k \mathrm{diag}(1_r / g_k) V_k^\top + 2 \lambda_y U_k \mathrm{diag}(1_r / g_k) V_k^\top L_Y + C.
\end{equation*}
Hence, the mirror descent update in \eqref{eq:mirror_descent_update} can be rewritten as
\begin{equation}
    \label{eq:mirror_descent_update_simplified}
    (U_{k + 1}, V_{k + 1}, g_{k + 1}) = \argmin_{(U, V, g) \in \cC(a, b, r, \alpha)} \mathrm{KL}\left((U, V, g) \, \| \, (\tilde{U}_k, \tilde{V}_k, \tilde{g}_k)\right).
\end{equation}

\paragraph{Dykstra's Algorithm.}
Changing the variable names to $(\xi^{(1)}, \xi^{(2)}, \xi^{(3)}) \in \R_+^{n \times r} \times \R_+^{m \times r} \times \R_+^r$ in \eqref{eq:mirror_descent_update_simplified}, we now derive the Dykstra's algorithm to solve the following:
\begin{equation}
    \label{eq:projection_problem}
    \argmin_{(U, V, g) \in \cC(a, b, r, \alpha)} \mathrm{KL}\left((U, V, g) \, \| \, (\xi^{(1)}, \xi^{(2)}, \xi^{(3)})\right).
\end{equation}
Algorithm \ref{alg:dykstra} summarizes Dykstra's algorithm for solving \eqref{eq:projection_problem}, where $r(\cdot)$ and $c(\cdot)$ denote the row and column sum operators, respectively. To see this, recall that Dykstra's algorithm \cite{dykstra1983algorithm} iterates as follows:
\begin{align*}
    (\xi_{2 k + 1}^{(1)}, \xi_{2 k + 1}^{(2)}, \xi_{2 k + 1}^{(3)}) & = \argmin_{(U, V, g) \in \cC_1(a, b, r, \alpha)} \mathrm{KL}\left((U, V, g) \, \| \, (\xi_{2 k}^{(1)} \odot q_{2 k - 1}^{(1)}, \xi_{2 k}^{(2)} \odot q_{2 k - 1}^{(2)}, \xi_{2 k}^{(3)} \odot q_{2 k - 1}^{(3)})\right), \\
    (q_{2 k + 1}^{(1)}, q_{2 k + 1}^{(2)}, q_{2 k + 1}^{(3)}) & = \left(\frac{\xi_{2 k}^{(1)} \odot q_{2 k - 1}^{(1)}}{\xi_{2 k + 1}^{(1)}}, \frac{\xi_{2 k}^{(2)} \odot q_{2 k - 1}^{(2)}}{\xi_{2 k + 1}^{(2)}}, \frac{\xi_{2 k}^{(3)} \odot q_{2 k - 1}^{(3)}}{\xi_{2 k + 1}^{(3)}}\right), \\
    (\xi_{2 k + 2}^{(1)}, \xi_{2 k + 2}^{(2)}, \xi_{2 k + 2}^{(3)}) & = \argmin_{(U, V, g) \in \cC_2(r)} \mathrm{KL}\left((U, V, g) \, \| \, (\xi_{2 k + 1}^{(1)} \odot q_{2 k}^{(1)}, \xi_{2 k + 1}^{(2)} \odot q_{2 k}^{(2)}, \xi_{2 k + 1}^{(3)} \odot q_{2 k}^{(3)})\right), \\
    (q_{2 k + 2}^{(1)}, q_{2 k + 2}^{(2)}, q_{2 k + 2}^{(3)}) & = \left(\frac{\xi_{2 k + 1}^{(1)} \odot q_{2 k}^{(1)}}{\xi_{2 k + 2}^{(1)}}, \frac{\xi_{2 k + 1}^{(2)} \odot q_{2 k}^{(2)}}{\xi_{2 k + 2}^{(2)}}, \frac{\xi_{2 k + 1}^{(3)} \odot q_{2 k}^{(3)}}{\xi_{2 k + 2}^{(3)}}\right).
\end{align*}
Propositions 2 and 3 of \cite{scetbon2021low} state the closed-form solutions for the projections onto $\cC_1(a, b, r, \alpha)$ and $\cC_2(r)$, where \eqref{eq:dykstra_projection1} and \eqref{eq:dykstra_projection2} provide the respective formulas. Algorithm \ref{alg:dykstra} iterates these projections until the row sums of $\xi_{2 k + 2}^{(1)}$ and $\xi_{2 k + 2}^{(2)}$ are sufficiently close to $a$ and $b$, respectively.

\begin{algorithm}[!htbp]
    \caption{$\mathsf{Dykstra}(\xi^{(1)}, \xi^{(2)}, \xi^{(3)}, a, b, \alpha, \delta)$}
    \label{alg:dykstra}
    \begin{algorithmic}[1]
        \Input $\xi^{(1)} \in \R_+^{n \times r}$, $\xi^{(2)} \in \R_+^{m \times r}$, $\xi^{(3)} \in \R_+^r$ with positive entries, marginal weights $a \in \Delta_n$ and $b \in \Delta_m$.
        \Input stability parameter $\alpha > 0$, desired accuracy $\delta > 0$.
        \State Initialize $(\xi_0^{(1)}, \xi_0^{(2)}, \xi_0^{(3)}) = (\xi^{(1)}, \xi^{(2)}, \xi^{(3)})$ and $(q_{-1}^{(1)}, q_{-1}^{(2)}, q_{-1}^{(3)}) = (q_0^{(1)}, q_0^{(2)}, q_0^{(3)}) = (1_{n \times r}, 1_{m \times r}, 1_r)$. 
        \State Set $k = 0$.
        \While{$||\xi_{2 k}^{(1)} 1_r - a\|_1 + ||\xi_{2 k}^{(2)} 1_r - b\|_1 \ge \delta$}
            \State Update
            \begin{equation}
                \label{eq:dykstra_projection1}
                \begin{aligned}
                    \xi_{2 k + 1}^{(1)} & = \mathrm{diag}\left(\frac{a}{r(\xi_{2 k}^{(1)} \odot q_{2 k - 1}^{(1)})}\right) (\xi_{2 k}^{(1)} \odot q_{2 k - 1}^{(1)}), \\
                    \xi_{2 k + 1}^{(2)} & = \mathrm{diag}\left(\frac{b}{r(\xi_{2 k}^{(2)} \odot q_{2 k - 1}^{(2)})}\right) (\xi_{2 k}^{(2)} \odot q_{2 k - 1}^{(2)}), \\
                    \xi_{2 k + 1}^{(3)} & = \max\left(\xi_{2 k}^{(3)} \odot q_{2 k - 1}^{(3)}, \alpha 1_r\right).
                \end{aligned}
            \end{equation}
            \State Update
            \begin{equation}
                \label{eq:dykstra_projection2}
                \begin{aligned}
                    \xi_{2 k + 2}^{(3)} & = \left(\xi_{2 k + 1}^{(3)} \odot q_{2 k }^{(3)} \odot c(\xi_{2 k + 1}^{(1)} \odot q_{2 k}^{(1)}) \odot c(\xi_{2 k + 1}^{(2)} \odot q_{2 k}^{(2)})\right)^{1 / 3}, \\
                    \xi_{2 k + 2}^{(1)} & = (\xi_{2 k + 1}^{(1)} \odot q_{2 k}^{(1)}) \odot \mathrm{diag}\left(\frac{\xi_{2 k + 2}^{(3)}}{c(\xi_{2 k + 1}^{(1)} \odot q_{2 k}^{(1)})}\right), \\
                    \xi_{2 k + 2}^{(2)} & = (\xi_{2 k + 1}^{(2)} \odot q_{2 k}^{(2)}) \odot \mathrm{diag}\left(\frac{\xi_{2 k + 2}^{(3)}}{c(\xi_{2 k + 1}^{(2)} \odot q_{2 k}^{(2)})}\right).
                \end{aligned}
            \end{equation}
            \State Update $k \leftarrow k + 1$.
        \EndWhile
        \Output $(\xi_{2 k}^{(1)}, \xi_{2 k}^{(2)}, \xi_{2 k}^{(3)})$.
    \end{algorithmic}
\end{algorithm}

\paragraph{Adaptive Step Size.} We use the adaptive step size scheme proposed by \cite{scetbon2022low} to avoid overflowing in the exponent terms used as input for Dykstra's algorithm. Hence, when solving \eqref{eq:mirror_descent_update} at iteration $k$, we set the step size $\gamma_k$ to be
\begin{equation*}
    \gamma_k = \frac{\gamma}{\|\nabla f(x_k)\|_\infty^2},
\end{equation*}
where $\gamma > 0$ is a user-defined parameter. Following the recommendation of \cite{scetbon2022low}, we set $\gamma \in [1, 10]$ for the initialization.

\begin{algorithm}[!htbp]
    \caption{Low-Rank LapOT}
    \label{alg:lot_quad}
    \begin{algorithmic}[1]
        \Input matching cost $C \in \R^{n \times m}$, marginal weights $a \in \Delta_n$ and $b \in \Delta_m$.
        \Input Graph Laplacians $L_X \in \R^{n \times n}$ and $L_Y \in \R^{m \times m}$, hyperparameters $\lambda_x, \lambda_y, \lambda$.
        \Input rank $r \in \N$, stability parameter $\alpha > 0$, desired accuracy $\delta > 0$ for Dykstra's algorithm.
        \Input number of iterations $T \in \N$.
        \State Initialize $(U_0, V_0, g_0) \in \R_+^{n \times r} \times \R_+^{m \times r} \times \R_+^{r}$ with positive entries.
        \For{$k = 0, \ldots, T - 1$}
            \State Set the step size $\gamma_k = \frac{\gamma}{\|\nabla f(U_k, V_k, g_k)\|_\infty^2}$.
            \State Compute
            \begin{equation*}
                M_k = 2 \lambda_x L_X U_k \mathrm{diag}(1_r / g_k) V_k^\top + 2 \lambda_y U_k \mathrm{diag}(1_r / g_k) V_k^\top L_Y + C.
            \end{equation*}
            \State Compute
            \begin{align*}
                \tilde{U}_k & = U_k \odot \exp\left(-\gamma_k \left(M_k V_k \mathrm{diag}(1_r / g_k) + \lambda \log(U_k)\right)\right), \\
                \tilde{V}_k & = V_k \odot \exp\left(-\gamma_k \left(M_k^\top U_k \mathrm{diag}(1_r / g_k) + \lambda \log(V_k)\right)\right), \\
                \tilde{g}_k & = g_k \odot \exp\left(-\gamma_k \left(- \frac{\mathrm{diag}(U_k^\top M_k V_k)}{g_k^2} + \lambda \log(g_k)\right)\right).
            \end{align*}
            \State Update $(U_{k + 1}, V_{k + 1}, g_{k + 1}) = \mathsf{Dykstra}(\tilde{U}_k, \tilde{V}_k, \tilde{g}_k, a, b, \alpha, \delta)$.
        \EndFor
    
        \Return \((U_T, V_T, g_T)\).
    \end{algorithmic}
\end{algorithm}

\section{Further Details of the Stock Market Application}
\label{appendix:stock market}
\paragraph{Similarity Profiles.}
For the data introduced in Section~\ref{subsec:stockmarket},
between every two stocks $i$ and $j$, we set the similarity between them to be
\[
    \bar{\rho}_{ij} = \frac{\mathrm{Cov}(R_i, \frac{P_j}{P_{\mathrm{market}}}R_j)}{\mathrm{Var}(R_{\mathrm{market}})},
\]
where $P_{\mathrm{market}}$ is the price of the market index, and $R_{\mathrm{market}}$ is the return of the market index. The distributions induced by these similarity profiles are then $\mu_i = \frac{1}{n}\sum_{k=1}^{n}\delta_{\bar{\rho}_{ik}}$ and $\nu_j = \frac{1}{n}\sum_{k=1}^{n}\delta_{\bar{\rho'}_{jk}}$. Now, let $C_{ij} = W_1(\mu_i, \nu_j)$, and assume $\bar{\rho}_{i1} \leq\bar{\rho}_{i2} \leq \dots \leq \bar{\rho}_{in}$ and $\bar{\rho'}_{j1} \leq \bar{\rho'}_{j2} \leq \dots \leq \bar{\rho'}_{jn}$, without loss of generality. Then, we have
\[
    \begin{split}
        W_1(\mu_i, \nu_j) 
        & = \frac{1}{n}\sum_{k=1}^n |\bar{\rho}_{ik} - \bar{\rho'}_{jk}| \\
        & \geq \frac{1}{n}|\sum_{k=1}^n \bar{\rho}_{ik} - \bar{\rho'}_{jk}| \\
        & = |\frac{1}{n}\sum_{k=1}^n\frac{\mathrm{Cov}(R_i, \frac{P_k}{P_{\mathrm{market}}}R_k)}{\mathrm{Var}(R_{\mathrm{market}})} - \frac{1}{n}\sum_{k=1}^n\frac{\mathrm{Cov}(R'_j, \frac{P'_k}{P'_{\mathrm{market}}}R'_k)}{\mathrm{Var}(R'_{\mathrm{market}})}| \\
        & = |\frac{\mathrm{Cov}(R_i, R_{\mathrm{market}})}{\mathrm{Var}(R_{\mathrm{market}})} - \frac{\mathrm{Cov}(R'_j, R'_{\mathrm{market}})}{\mathrm{Var}(R'_{\mathrm{market}})}| \\
        & = |\beta_{i} - \beta'_{j}|,
    \end{split}
\]
which, in practice, appears to be a pretty tight upper bound. This means that by choosing such $C$, we connect the cost of transportation between the American stock $i$ and the Japanese stock $j$ with the difference between their betas \cite{brealey2011principles}.

\paragraph{Additional Results of RSC}
Here, we provide visualizations of the results of using our method on financial data. In Figures \ref{fig:sp500 clusters} and \ref{fig:Japan clusters}, we show the cluster structure obtained by using RSC on the data introduced in Section~\ref{subsec:stockmarket}. 

\begin{figure}[!htbp]
    \centering
    \includegraphics[trim=0.1cm 0.2cm 0cm 0.2cm, clip=true, width=0.9\textwidth]{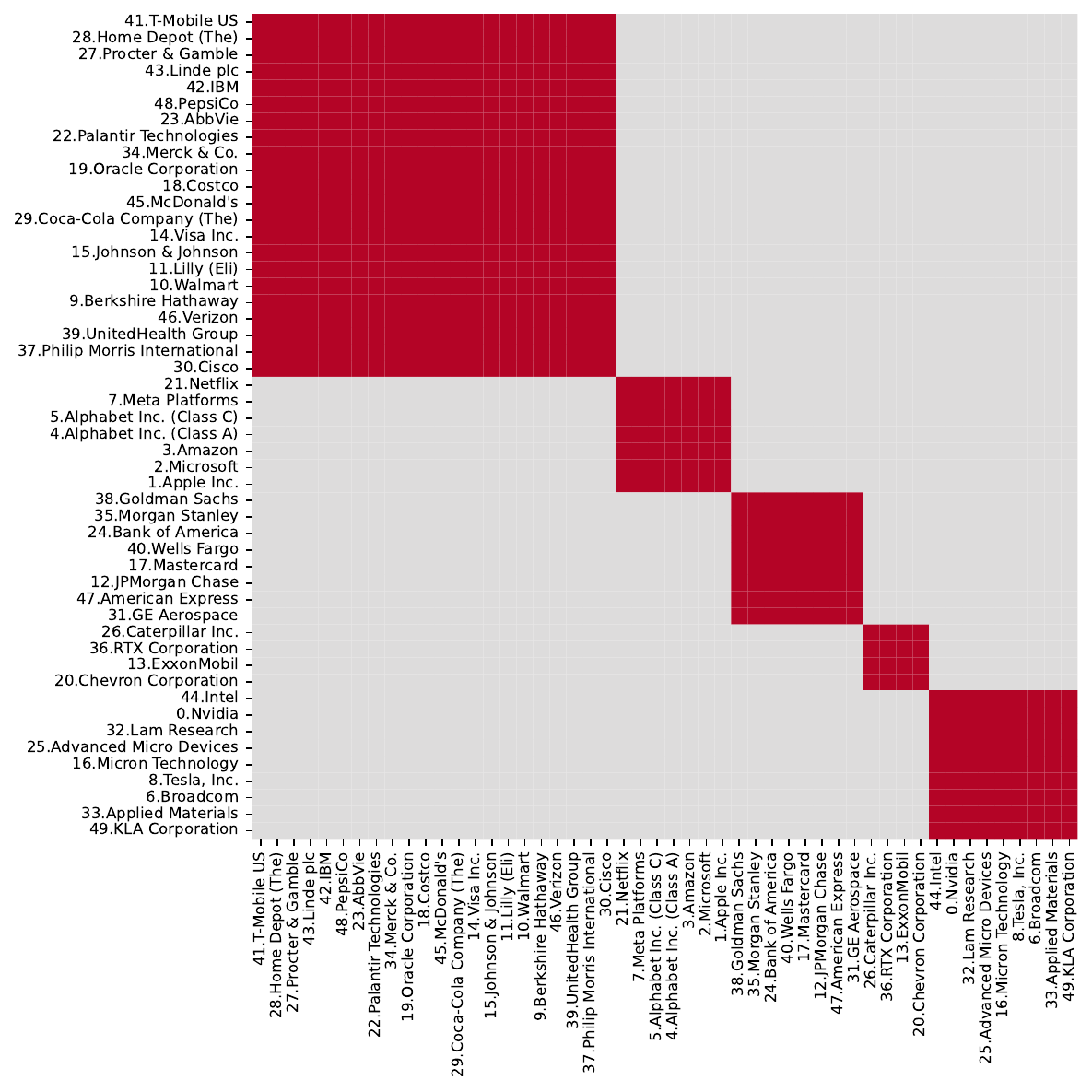}
    \caption{The cluster structure of the top 50 stocks from S\&P 500 found by the RSC. We see the rough partition into $\{$\textbf{big tech, semiconductor \& deep tech infrastructure, finance, energy, pharma \& retail \& the rest}$\}$.}
    \label{fig:sp500 clusters}
\end{figure}

\begin{figure}[!htbp]
    \centering
    \includegraphics[trim=0.1cm 0.2cm 0cm 0.2cm, clip=true, width=0.9\textwidth]{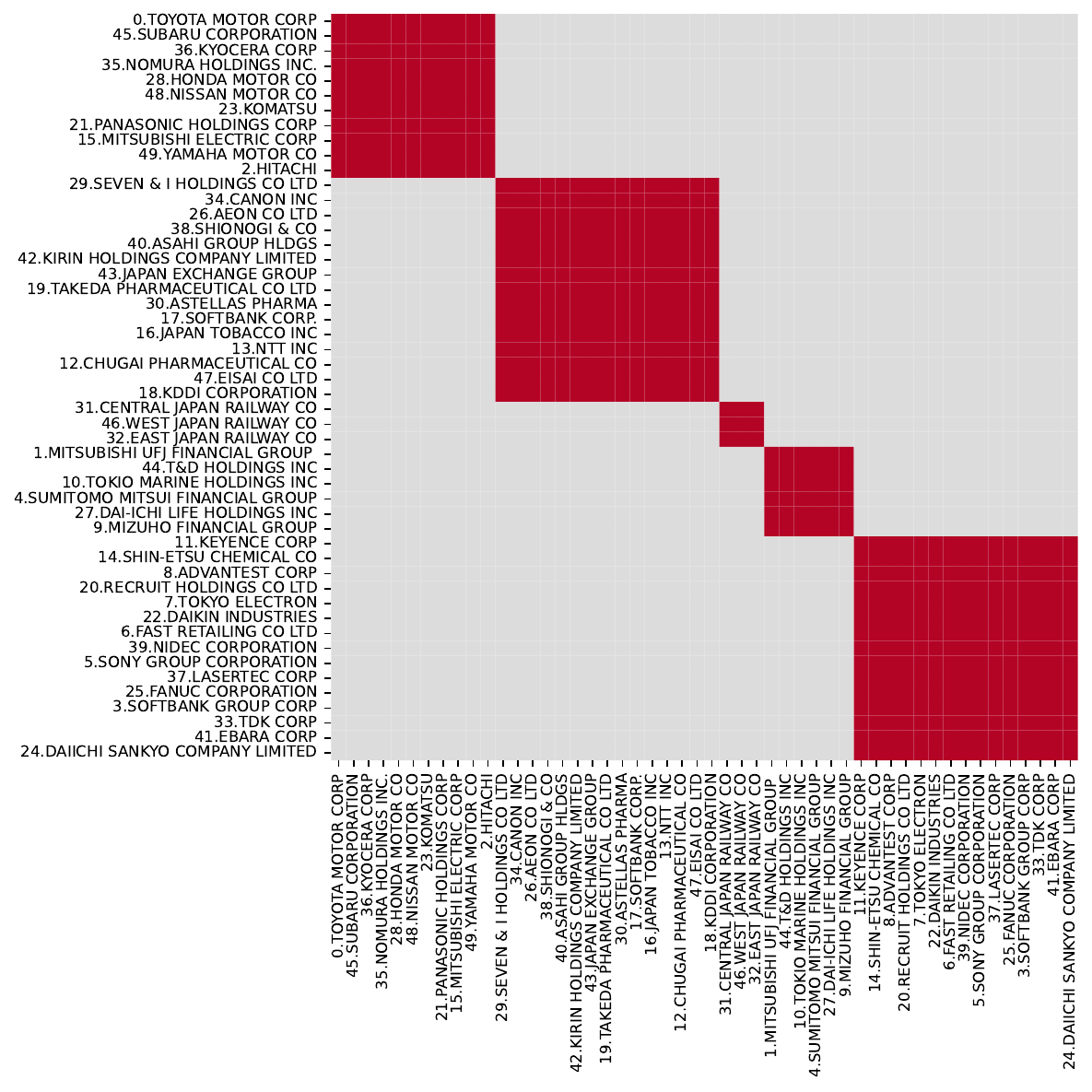}
    \caption{The cluster structure of the top 50 Japanese stocks found by the RSC. We see the rough partition into $\{$\textbf{automotive \& mobility, Japan Railway, finance, semiconductor \& electronics manufacturing \& technology, pharma \& retail \& the rest}$\}$.}
    \label{fig:Japan clusters}
\end{figure}

\end{document}